\newtheorem{lemma}{Lemma}
\newtheorem{proposition}{Proposition}
\newcommand{\cA}{{\cal A}}
\newcommand{\cB}{{\cal B}}
\newcommand{\cD}{{\cal D}}
\newcommand{\cE}{{\cal E}}
\newcommand{\cF}{{\cal F}}
\newcommand{\cS}{{\cal S}}
\newcommand{\cU}{{\cal U}}
\newcommand{\cW}{{\cal W}}
\newcommand{\cX}{{\cal X}}
\newcommand{\cY}{{\cal Y}}
\def\E{\mathbb{E}}
\def\P{\mathbb{P}}
\def\RR{\mathbb{R}}
\def\R{\mathbb{R}}
\def\mat{\hbox{\rm mat}}
\title{Factor Augmented Tensor-on-Tensor Neural Networks}
\author{
    Guanhao Zhou,
    Yuefeng Han,
    Xiufan Yu
}
\begin{document}

\maketitle

\begin{abstract}
This paper studies the prediction task of tensor-on-tensor regression in which both covariates and responses are multi-dimensional arrays (a.k.a., tensors) across time with arbitrary tensor order and data dimension. Existing methods either focused on linear models without accounting for possibly nonlinear relationships between covariates and responses, or directly employed black-box deep learning algorithms that failed to utilize the inherent tensor structure. In this work, we propose a Factor Augmented Tensor-on-Tensor Neural Network (FATTNN) that integrates tensor factor models into deep neural networks. We begin with summarizing and extracting useful predictive information (represented by the ``factor tensor'') from the complex structured tensor covariates, and then proceed with the prediction task using the estimated factor tensor as input of a temporal convolutional neural network. The proposed methods effectively handle nonlinearity between complex data structures, and improve over traditional statistical models and conventional deep learning approaches in both prediction accuracy and computational cost. By leveraging tensor factor models, our proposed methods exploit the underlying latent factor structure to enhance the prediction, and in the meantime, drastically reduce the data dimensionality that speeds up the computation. The empirical performances of our proposed methods are demonstrated via simulation studies and real-world applications to three public datasets. Numerical results show that our proposed algorithms achieve substantial increases in prediction accuracy and significant reductions in computational time compared to benchmark methods.
\end{abstract}

%

\section{Introduction} \label{sec:introduction}

Motivated by a wide range of industrial and scientific applications, forecasting tasks using one tensor series to predict another have become increasingly prevalent in various fields. In finance, tensor models are used to forecast future stock prices by analyzing multi-dimensional stock attributes over time \citep{tran2017tensor}. In meteorology, tensor models predict future weather conditions using historical data formatted as multi-dimensional tensors (latitude, longitude, time, etc.) \citep{bilgin2021tent}. In neuroscience, tensor models are employed to process medical imaging like MRI/fMRI scans \citep{wei2023tensor}. 
Though predictive modeling of tensors has emerged rapidly over the past decade, most existing studies primarily focus on scenarios when either covariates or responses are tensors, such as scalar-on-tensor regression \citep{guo2011tensor,zhou2013tensor,wimalawarne2016theoretical,li2018tucker,ahmed2020tensor}, matrix-on-tensor regression \citep{hoff2015multilinear,kossaifi2020tensor}, and tensor-on-vector regression \citep{li2017parsimonious,sun2017store}. 
\textbf{Tensor-on-tensor}\footnote{Following the literature, we distinguish ``tensor-on-tensor regression'' from ``tensor regression''. ``Tensor regression'' refers to regression models with tensor covariates and scalar responses. ``Tensor-on-tensor regression'' refers to regression models with both tensor covariates and tensor responses. As a matter of fact, tensor-on-tensor regression encompasses tensor regression.} \textbf{prediction}, when both covariates and responses are tensors, 
remains a challenging task due to the inherent complexities of tensor structures. 

One commonly used strategy for tensor-on-tensor modeling is to flatten the tensors into matrices (or even vectors) \citep{kilmer2011factorization,choi2014dfacto} so that matrix-based analytical tools become applicable. However, flattening can lead to a loss of spatial or temporal information, especially for datasets that are inherently dependent on data structures. For instance, in image processing, the relative positions of pixels carry important information about object shapes, which is lost when the image is flattened. 

Alternatively, there is a handful of recent works on tensor-on-tensor regression models that do not adopt flattening but deal with the tensor structures directly \cite{lock2018tensor,liu2020low,gahrooei2021multiple,luo2022tensor}. 
However, these methods posit a linear prediction model between covariates and responses, limiting their capacity to capture potential nonlinear relationships.

Another emerging trend in tensor-on-tensor prediction involves neural networks, such as recurrent neural networks (RNN), convolutional neural networks (CNN), recurrent convolutional neural networks (RCNN), and temporal convolutional networks (TCN), to name a few. Deep neural network is a powerful tool in prediction tasks owing to its ability to learn intricate patterns from complex datasets. Structured with multiple layers of interconnected neurons, neural networks excel in capturing nonlinearity in data. Nevertheless, they often operate in a black-box manner and typically require extensive computational resources. 

In this work, we develop a Factor Augmented Tensor-on-Tensor Neural Network (FATTNN) for forecasting a sequence of tensor responses using tensor covariates of arbitrary tensor order and data dimensions. One key innovation of our method is the integration of tensor factor models into deep neural networks for tensor-on-tensor regression. Factor models have been a widely utilized tool in matrix/vector data analysis \citep{bai2002determining,stock2002forecasting,pan2008modelling,lam2012factor} for understanding common dependence among multi-dimensional covariates. In recent years, researchers have advanced the methodology of factor models to the context of tensor time series \citep{chen2022analysis,chen2022factor,han2020tensor,han2021cp,han2022rank,han2022tensor,chen2024estimation,yu2024dynamic}. In our models, we borrow the strengths from tensor factor models, offering a more general approach than traditional vector factor
analysis for prediction \citep{sen2019think,fan2017sufficient,yu2022nonparametric}. 

Our contributions can be summarized in three folds. 
\begin{itemize}
    \item We propose a FATTNN model for tensor-on-tensor prediction that integrates tensor factor models into deep neural networks. Via a tensor factor model, FATTNN exploits the latent factor structures among the complex structured tensor covariates, and extracts useful predictive information for subsequent modeling. The utilization of the factor model reduces the data dimension drastically while preserving the tensorial data structures, contributing to substantial increases in prediction accuracy and significant reductions in computational time.  
    \item In addition to the improved prediction accuracy and reduced computation cost, FATTNN offers several advantages over existing methods. First, it preserves the inherent tensor structure without any flattening or tensor contraction operations, preventing the possible loss of spatial or temporal information. Second, the utilization of neural networks equips FATTNN with exceptional capability in modeling nonlinearity in response-covariate relationships. 
    \item FATTNN provides a comprehensive framework for the general supervised learning question of predicting tensor responses using tensor covariates. Although initially introduced under the problem settings of tensor-on-tensor time series forecasting, the proposed framework is not limited to time series data, but can accommodate a variety of data types. With different types of factor models and choices of neural network architectures tailored to specific characteristics of the data, FATTNN can be naturally adapted to tensor-on-tensor regression for various data types. 
\end{itemize}

\section{Related Work}
Neural networks are powerful tools for processing complex tensor data. Early works involve RNN and its variations (e.g., Long Short-Term Memory (LSTM) networks) to accommodate the temporal dependence, and CNN to capture the temporal and spatial relationships underlying the data structures. Recently, more advanced architectures have been proposed, integrating the strengths of traditional models to capture the increasingly complex data dependence in a more powerful and efficient manner. Notable examples include convolutional LSTM (ConvLSTM) \citep{shi2015convolutional}, recurrent CNN (RCNN) \citep{liang2015recurrent}, and temporal convolutional networks (TCN)\citep{bai2018empirical}.

Another popular thread is to incorporate a tensor decomposition procedure in the model training to reduce computational complexity and potentially enhance interpretability.
Recent examples include tensorizing neural networks \citep{novikov2015tensorizing},  convolutional tensor-train LSTM (Conv-TT-LSTM) \citep{su2020convolutional}, tensor regression networks \citep{kossaifi2020tensor} and its efficient variant using tensor dropout \citep{kolbeinsson2021tensor}. Methods may vary depending on how the tensor decomposition is conducted {\cite{fang2022bayesian,wang2022nonparametric,tao2024undirected}.}

\section{Preliminaries}
\label{sec:prelim}

\noindent \textbf{Notation.} Before proceeding, we first set up some notation. Let $\mathcal{X}\in \mathbb{R}^{d_1 \times \dots \times d_K}$ denote a $K$-th order tensor of dimension $d_1\times d_2 \times \dots \times d_K$. Let $\mathcal{X}_{i_1,\dots,i_K}$ denote the $({i_1,\dots,i_K})$-th entry of the tensor $\mathcal{X}$ and $\mathcal{X}[\mathcal{I}_1,\dots,\mathcal{I}_K]$ denotes the sub-tensor of $\mathcal{X}$ for $\mathcal{I}_1\subseteq \{1,\dots,d_1\},\dots,\mathcal{I}_K\subseteq \{1,\dots,d_K\}$. 
The matricization operation $\mat_k(\cdot)$ unfolds an order-$K$ tensor along mode $k$ to a matrix, say $\cX\in \R^{d_1 \times \dots \times d_K}$ to $\mat_k(\cX)\in\R^{d_k \times d_{-k}}$ where $d_k=\prod_{j\neq k} d_j$ and its detailed definition is provided in the appendix. The Frobenius norm of tensor $\cX$ is defined as $\|\cX\|_{\rm F}=(\sum_{i_1,i_2,\cdots,i_K}\cX_{i_1,\dots,i_K}^2)^{1/2}$. The Tucker rank of
an order-$K$ tensor $\cX$, denoted by Tucrank$(\cX)$, is defined as a $K$-tuple $(r_1,...,r_K)$, where $r_k=\text{rank}(\mat_k(\cX))$. Any Tucker rank-$(r_1,...,r_K)$ tensor $\cX$ admits the following Tucker decomposition \citep{tucker1966some}: $\cX=\cS\times_1 U_1\times_2 \cdots \times_K U_K$, where $\cS\in\R^{r_1\times\cdots \times r_K}$ is the core tensor and $U_k=\text{SVD}_{r_k}(\mat_k(\cX)) $ is the mode-$k$ top $r_k$ left singular vectors. Here, the mode-$k$ product of $\cX\in\R^{d_1 \times \cdots \times d_K}$ with a matrix $ B \in\R^{d_k\times r_k}$, denoted by $\cX\times_k B$, is a $d_1\times \cdots\times d_{k-1}\times r_k \times d_{k+1}\times \cdots \times d_K$-dimensional tensor, and its detailed definition is provided in Appendix A. 
The following abbreviations are used to denote the tensor-matrix product along multiple modes: $\cX\times_{k=1}^K U_k=\cX\times_1 U_1\times_2\cdots\times_K U_K$ and $\cX\times_{\ell\neq k} U_{\ell}=\cX\times_1 U_1\times_2\cdots\times_{k-1} U_{k-1} \times_{k+1} U_{k+1}\times_{k+2}\cdots \times_K U_K$.
For any two matrices $A\in\RR^{m_1\times r_1},B\in \RR^{m_2\times r_2}$, we denote the Kronecker product $\odot$ as $A\odot B\in \RR^{m_1 m_2 \times r_1 r_2}$. For any two tensors $\cA\in\RR^{m_1\times m_2\times \cdots \times m_K}, \cB\in \RR^{r_1\times r_2\times \cdots \times r_N}$, we denote the tensor product $\otimes$ as $\cA\otimes \cB\in \RR^{m_1\times \cdots \times m_K \times r_1\times \cdots \times r_N}$, such that
$(\cA\otimes\cB)_{i_1,...,i_K,j_1,...,j_N}=(\cA)_{i_1,...,i_K}(\cB)_{j_1,...,j_N} .$

\noindent \textbf{Problem Setup.} Suppose we observe a time series dataset comprising $n$ temporal samples $\cD = \{ (\cX_t,\cY_t), 1\le t \le n \} $, where $\mathcal{X}_t \in \mathbb{R}^{d_1 \times \dots \times d_K}$ is a tensor of covariates and $\mathcal{Y}_t \in \mathbb{R}^{p_1 \times \dots \times p_q}$ is the tensor response variable. For some newly observed covariates $\{ \cX_t \}_{t=n+1}^{n+m}$, \emph{the forecasting task of tensor-on-tensor time series regression is to accurately predict the corresponding future tensor responses $\{\cY_t\}_{t=n+1}^{n+m}$, given the observed time series $\{ (\cX_t,\cY_t) \}_{t=1}^{n} $ and the new covariates $\{ \cX_t \}_{t=n+1}^{n+m}$}. Here, $m$ is the forecast horizon. In subsequent discussions, we denote the series in training time range as $\mathcal{X}^{(tr)}=\{ \mathcal{X}_t \}_{t=1}^{n}$ and $\mathcal{Y}^{(tr)}=\{ \mathcal{Y}_t \}_{t=1}^{n}$, and the covariates in forecasting time range as $\mathcal{X}^{(te)}=\{ \mathcal{X}_t \}_{t=n+1}^{n+m}$.

\section{Methodology: Factor Augmented Tensor-on-Tensor Time Series Regression}
\label{sec:methodology}

\begin{figure*}[t]
    \centering
    \includegraphics[width=0.75\textwidth]{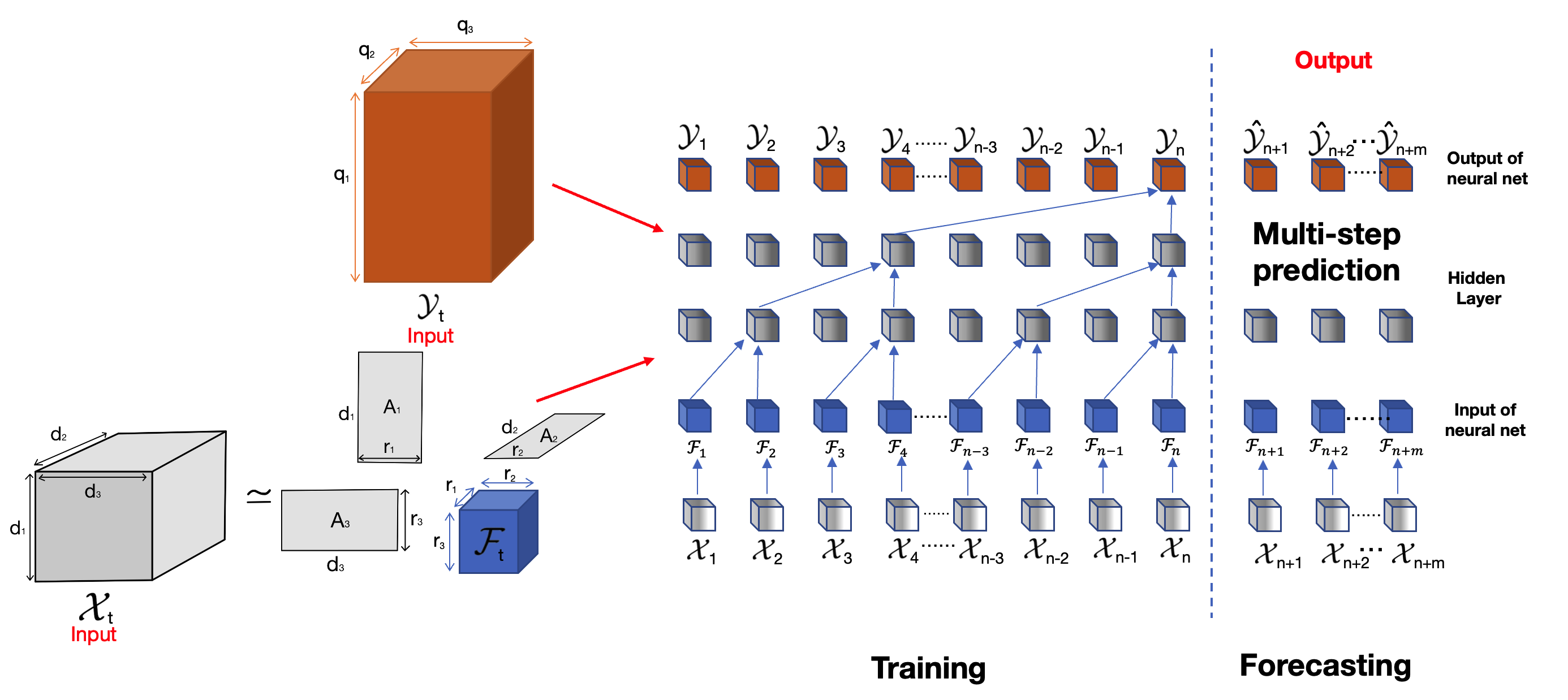} 
     \captionsetup{justification=centering} 
    \caption{A graphical illustration of the proposed FATTNN. The input is the observed time series $\{ (\cX_t,\cY_t) \}_{t=1}^{n} $ and the new covariates $\{ \cX_t \}_{t=n+1}^{n+m}$. The output is the forecasted future tensor responses, denoted by $\{\widehat\cY_t\}_{t=n+1}^{n+m}$ }
\label{fattnn}
\end{figure*}

In this section, we introduce our Factor Augmented Tensor-on-Tensor Neural Network, dubbed as FATTNN.
Our work explores a broad category of challenges known as tensor-on-tensor regression, which seeks to elucidate the relationships between covariates and responses that may manifest as scalars, vectors, matrices, or higher-order tensors. As illustrated in Figure \ref{fattnn}, our method consists of two components. First, factor tensor features are extracted by low-rank tensor factorization. Second, we develop a Tensor-on-Tensor Neural Network based on Temporal Convolutional Network (TCN) \citep{bai2018empirical}. In the first subsection, we present a tensor factor model. This model can capture global patterns among the observed time series covariate tensors, by representing each of the original covariate tensor $\cX_t$ as a multilinear combination of $r_1\times\cdots \times r_K$ basis across each tensor mode $k$ plus noise, where $r_k\ll d_k$. It offers a more general approach than the traditional vector factor analysis for prediction \citep{sen2019think,fan2017sufficient,luo2022inverse,yu2022nonparametric,fan2023factor}, and preserves the tensor structures of features. It also provides dimension reduction, which significantly reduces computational complexity.
In the second subsection, we describe how the output from the global tensor factor model can be used as an input covariate for a TCN to predict future response tensors.

\subsection{Tensor Factor Model (TFM)} \label{sec:factor}

Our representation learning module utilizes low rank tensor factorization for covariate tensor. The idea is to factorize the training covariate tensors $\cX_t\in\R^{d_1\times \cdots\times d_K}, 1\le t\le n,$ into low dimensional tensors $\cF_t\in\R^{r_1\times \cdots \times r_K}$ and linear combination matrices (loading matrices) $A_k\in\R^{d_k\times r_k}$, where $r_k\ll d_k$. Specifically, the model can be expressed as

\begin{align}\label{tfm}
\mathcal{X}_t = \mathcal{F}_t \times_1 A_1 \times_2 \ldots \times_K A_K  +\cE_t,
\end{align}
where $\cE_t$ is idiosyncratic noise of the tensor $\cX_t$, and $t$ is a time point.
Here the \( k \)-mode product of \( \mathcal{X} \in \mathbb{R}^{d_1 \times d_2 \times \ldots \times d_K} \) with a matrix \( U \in \mathbb{R}^{d_k' \times d_k} \), denoted as \( \mathcal{X} \times_k U \), is an order \( K \)-tensor of size \( d_1 \times \ldots \times d_{k-1} \times d_k' \times d_{k+1} \times \ldots \times d_K \) such that
$ (\mathcal{X} \times_k U)_{i_1,\ldots,i_{k-1},j,i_{k+1},\ldots,i_K} = \sum_{i_k=1}^{d_k} \mathcal{X}_{i_1,i_2,\ldots,i_K} U_{j,i_k}.$
If $K=2$, the matrix factor model can be written as $\cX_t=A_1\cF_t A_2^\top +\cE_t$.
The latent core tensor $\cF_t$, which typically encapsulates the most critical information, generally possesses dimensions significantly smaller than those of $\cX_t$. Thus, the covariate tensor $\cX_t$ can be thought of to be comprised of a basis tensor features $\cF_t$ that capture the global patterns in the whole data-set, and all the original covariates are multilinear combinations of these basis tensor features. 
In the next subsection, 
we will discuss how a TCN can be utilized to leverage the temporal structure of the training data.

For predicting future response tensors, given a new covariate tensor $\cX_t$, we can also extract low-dimensional feature tensors $\cF_t$ using the estimated deterministic loading matrices $A_k$ and model \eqref{tfm}. This approach efficiently captures the essential characteristics from the high-dimensional input data. The reduced dimensionality allows for more efficient data processing and analysis while retaining essential features.

To capture the underlying tensor features of the covariates, we employ 
a Time series Inner-Product Unfolding Procedure (TIPUP). It utilizes the linear temporal dependence among the covariates.
The TIPUP method performs singular value decomposition (SVD) for each tensor mode $k=1,...,K$:
\begin{equation}
\widehat A_k=\text{LSVD}_{r_k}  \left( \frac{1}{n} \sum_{t=1}^{n} \mat_k(\cX_{t}) \mat_k^\top(\cX_t)\right).
\label{utipup}
\end{equation}
The estimation methods in \eqref{utipup} can be further enhanced through an iterative refinement algorithm, detailed in the appendix. The factor tensors in the training data thus can be estimated by $\widehat\cF_{t} = \cX_{t} \times_{k=1}^K \widehat A_k^\top,\text{ for }1\le t\le n$.

In the next proposition, we demonstrate the provable benefits of using a tensor factor model with Tucker low-rank structure \eqref{tfm}. Specifically, we establish the convergence rates of the linear space of the loading matrices, which in turn ensure the accuracy of the estimated factor tensors.

\begin{proposition}\label{thm:itipup}
Assume each elements of the idiosyncratic noise $\cE_t$ in \eqref{tfm} are i.i.d. $N(0,1)$. The ranks $r_1,...,r_K$ are fixed. The factor process $\cF_t$ is weakly stationary and its cross-outer-product process is ergodic in the sense of 
$\frac{1}{n}\sum_{t=1}^n \cF_{t}\otimes\cF_t \rightarrow \E (\cF_{t}\otimes\cF_t)$ 
in probability, 
where the elements of $\E (\cF_{t}\otimes\cF_t)$ are all finite. In addition, the condition numbers of $A_k^\top A_k$ ($k=1,...,K$) are bounded. Let $\lambda=\prod_{k=1}^K \| A_k \|_2$ and grow as dimensions $d_k$ increases. Then, the iterative TIPUP estimator satisfies
\begin{align*}
& \|\widehat A_k (\widehat A_k^\top \widehat A_k)^{-1} \widehat A_k^\top -A_k (A_k^\top A_k)^{-1} A_k\|_2 \\
= & O_{\P} \left( \frac{\max_k \sqrt{d_k} }{\lambda \sqrt{n}}+ \frac{\max_k \sqrt{d_k}}{\lambda^2 \sqrt{n}} \right), \qquad 1\le k\le K.   
\end{align*}
\end{proposition}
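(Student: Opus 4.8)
The plan is to recast the statement as a perturbation bound for the invariant subspace spanned by $\widehat A_k$, and to control that perturbation through the signal-to-noise structure induced by the factor model \eqref{tfm}. First I would unfold the model along mode $k$: writing $A_{-k}=A_K\odot\cdots\odot A_{k+1}\odot A_{k-1}\odot\cdots\odot A_1$ and $E_{t,k}=\mat_k(\cE_t)$, model \eqref{tfm} gives $\mat_k(\cX_t)=A_k\,\mat_k(\cF_t)\,A_{-k}^\top+E_{t,k}$. Substituting this into the Gram matrix in \eqref{utipup} decomposes $\widehat\Sigma_k=\tfrac1n\sum_t \mat_k(\cX_t)\mat_k^\top(\cX_t)$ into a signal term $S_k=A_k\big(\tfrac1n\sum_t \mat_k(\cF_t)A_{-k}^\top A_{-k}\mat_k^\top(\cF_t)\big)A_k^\top$, two cross terms linear in the noise, and a noise Gram term $\tfrac1n\sum_t E_{t,k}E_{t,k}^\top$. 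Since $S_k$ has column space exactly the range of $A_k$, the target projection $A_k(A_k^\top A_k)^{-1}A_k^\top$ is the rank-$r_k$ leading eigenprojection of $S_k$ while $\widehat A_k\widehat A_k^\top$ is that of $\widehat\Sigma_k$, so the problem reduces to bounding the gap between these two eigenprojections.

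The key structural point, and the reason the bound carries only $\max_k\sqrt{d_k}$ rather than $\sqrt{d_k d_{-k}}$, is the iterative refinement: at each sweep mode $k$ is estimated from the \emph{projected} tensors $\cX_t\times_{\ell\neq k}\widehat A_\ell^\top$, so the effective noise matrix becomes $\widetilde E_{t,k}=\mat_k(\cE_t\times_{\ell\neq k}\widehat A_\ell^\top)$ of size $d_k\times r_{-k}$ with $r_{-k}=\prod_{\ell\neq k}r_\ell$ fixed, rather than $d_k\times d_{-k}$. Under the i.i.d.\ $N(0,1)$ assumption and orthonormal $\widehat A_\ell$, $\widetilde E_{t,k}$ is conditionally Gaussian, so I would bound the projected cross term by $\|A_k\|_2\|A_{-k}\|_2\sqrt{d_k}/\sqrt n=\lambda\sqrt{d_k}/\sqrt n$ via a Gaussian matrix concentration inequality. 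For the reduced noise Gram $\tfrac1n\sum_t\widetilde E_{t,k}\widetilde E_{t,k}^\top$, its mean $r_{-k}I_{d_k}$ is isotropic and hence leaves every eigenspace of $S_k$ unchanged, so only the centered fluctuation matters; a Wishart / matrix-Bernstein bound gives $\|\tfrac1n\sum_t(\widetilde E_{t,k}\widetilde E_{t,k}^\top-r_{-k}I)\|_2=O_\P(\sqrt{d_k r_{-k}}/\sqrt n)=O_\P(\sqrt{d_k}/\sqrt n)$, and restricting to the $r_k$-dimensional signal subspace (the quantity that actually enters a one-sided $\sin\Theta$ bound) keeps it at this scale.

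With the eigengap of $S_k$ bounded below by a constant multiple of $\lambda^2$---which follows from the ergodicity hypothesis $\tfrac1n\sum_t\cF_t\otimes\cF_t\to\E(\cF_t\otimes\cF_t)$ together with the bounded condition numbers of $A_k^\top A_k$, ensuring the projected factor second moment is uniformly well-conditioned---I would apply a Davis--Kahan $\sin\Theta$ theorem. Dividing the cross-term bound $\lambda\sqrt{d_k}/\sqrt n$ and the noise-fluctuation bound $\sqrt{d_k}/\sqrt n$ by the gap $\lambda^2$ yields $O_\P\!\big(\sqrt{d_k}/(\lambda\sqrt n)+\sqrt{d_k}/(\lambda^2\sqrt n)\big)$ for $\|\sin\Theta\|_2$; since both $\widehat A_k\widehat A_k^\top$ and $A_k(A_k^\top A_k)^{-1}A_k^\top$ are orthogonal projections onto $r_k$-dimensional subspaces, their spectral-norm difference equals $\|\sin\Theta\|_2$, giving the claimed rate, with $\max_k$ making it uniform across modes.

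The main obstacle is the coupling introduced by the iteration: the mode-$k$ bound presumes accurate estimates $\widehat A_\ell$ for $\ell\neq k$, which are themselves known only to the same precision. I would close this with an induction over sweeps, showing that if $\max_\ell\|\widehat A_\ell\widehat A_\ell^\top-A_\ell(A_\ell^\top A_\ell)^{-1}A_\ell^\top\|_2\le\epsilon_s$ then one more sweep returns $\epsilon_{s+1}\le c\,\epsilon_s+O_\P\!\big(\tfrac{\max_k\sqrt{d_k}}{\lambda\sqrt n}+\tfrac{\max_k\sqrt{d_k}}{\lambda^2\sqrt n}\big)$ for a contraction factor $c<1$, the contraction coming from auxiliary-mode errors entering only through the well-conditioned factors $\widehat A_\ell^\top A_\ell$ and being second order relative to the leading statistical term. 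Iterating a logarithmic number of times drives the fixed point to the stated rate. The delicate accounting is verifying that the signal strength $\asymp\lambda^2$ is preserved under projection by the imperfect $\widehat A_\ell$, and that the projected noise stays close enough to isotropic Gaussian for the concentration bounds to hold uniformly across iterations.
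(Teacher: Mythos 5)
Your proposal follows essentially the same route as the paper's proof: the same signal/cross-term/noise-Gram decomposition of the mode-$k$ covariance, Davis--Kahan with eigengap $\asymp\lambda^2$ guaranteed by ergodicity and bounded condition numbers, dimension reduction of the noise from $d_{-k}$ to $r_{-k}$ via projection onto the auxiliary loadings, and a contraction-plus-induction argument across sweeps to reach the fixed-point rate. The one device you leave implicit --- the $\epsilon$-net covering argument that makes the projected-noise bounds uniform over the data-dependent $\widehat A_\ell$ (which you correctly flag as the delicate point) --- is exactly how the paper closes that gap, so the approaches coincide in substance.
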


\subsection{Tensor-on-Tensor Neural Network based TCN (TTNN)} \label{sec:reg}

If we are equipped with a TCN that identifies the temporal patterns in the training dataset $(\cX^{(tr)},\cY^{(tr)})$, we can use this model to enhance the temporal structures in $(\cF^{(tr)},\cY^{(tr)})$. Thus, a TCN is applied to the input sequence of estimated factor tensors $\widehat\cF_1,\ldots, \widehat\cF_{t}$ along with the corresponding output sequence of response tensors $\cY_1,\ldots,\cY_t$ to encapsulate the temporal dynamics. Let $\eta(\cdot)$ represent this network. 
The whole temporal network $\eta(\cdot)$ can be trained using the low-rank factor tensors $\widehat\cF_1,\ldots, \widehat\cF_{n}$ derived from training dataset. 
In the literature, factor-augmented linear regression has been well studied recently in statistics, economics and machine learning \cite{stock2002forecasting, bai2006confidence, bair2006prediction, fan2017sufficient, bing2021prediction}.

The trained temporal network $\eta(\cdot)$ can be effectively utilized for multi-step look-ahead prediction in a standard approach. The factor tensors for future time steps can be estimated using $\widehat\cF_{n+h} = \cX_{n+h} \times_{k=1}^K \widehat A_k^\top$,
where $h>0$. Using the historical data points of factor tensors $\widehat\cF_{t}, 1\le t\le n+h-1$, the prediction for the subsequent time step, $\cY_{n+h}$, is generated by $\widehat\cY_{n+h}=\eta(\widehat\cF_1,...,\widehat\cF_{n+h})$.

Our hybrid forecasting model incorporates a TCN that inputs past data points from the original raw response time series and the estimated factor tensors, enhancing prediction accuracy. The pseudo-code for our model is outlined in Algorithm \ref{alg:tensor_forecast}. Due to space limitations, the determination of the rank of the factor process is discussed in Appendix A. 

\begin{algorithm}
\caption{Factor Augmented Tensor-on-Tensor Neural Network (FATTNN)}
\label{alg:tensor_forecast}
\begin{algorithmic}[1]
\STATE \textbf{Input:} Observed tensor time series \( (\mathcal{X}_1,\mathcal{Y}_1), \ldots, (\mathcal{X}_{n},\mathcal{Y}_{n}), \mathcal{X}_{n+1}, \ldots, \mathcal{X}_{n+m} \), rank \( r_k \) for each mode \( k \), forecasting horizon \( m \).
\STATE \textbf{Output:} Forecasts \( \widehat{\mathcal{Y}}_{n+1}, \ldots, \widehat{\mathcal{Y}}_{n+m} \).
\STATE Compute the factor tensor $\widehat\cF_t$ for $t = 1, \ldots, n$, by 
TIPUP \eqref{utipup} or its iterative version in Algorithm \ref{alg:itipup}.
\STATE Fit a TCN $\eta(\cdot)$ to the input sequence of factor tensors $\widehat\cF_1,\ldots, \widehat\cF_{n}$ and the output sequence of response tensor $\cY_1,...,\cY_{n}$ to capture the temporal dynamics.
\STATE Forecast $\cY_{n+1}, \ldots, \cY_{n+m}$ using the fitted TCN.
\FOR{\( h = 1 \) to \( m \)}
    \STATE $\widehat\cF_{n+h} = \cX_{n+h} \times_{k=1}^K \widehat A_k^\top$.
    \STATE $\widehat{\mathcal{Y}}_{n+h}=\eta(\widehat\cF_1,\ldots,\widehat{\mathcal{F}}_{n+h})$.
\ENDFOR
\RETURN $\widehat{\mathcal{Y}}_{n+1}, \ldots, \widehat{\mathcal{Y}}_{n+m}$.
\end{algorithmic}
\end{algorithm}

\section{Numerical Results}
\label{sec:numericalStudies}

In this section, we investigate the finite-sample performance of our proposed FATTNN methods via simulation studies and real-world applications.
Specifically, 
we implement the proposed \textbf{FATTNN}\footnote{Our code is available on Github: \url{https://github.com/chenezandie/aaai2025-fattnn}.} 
, we consider one traditional statistical model - the multiway tensor-on-tensor regression \citep{lock2018tensor} (denoted by \textbf{Multiway}),
and four state-of-the-art deep learning approaches, including the temporal convolutional network \citep{bai2018empirical} of $\mathcal{Y}$ regressed on $\mathcal{X}$ (denoted by \textbf{TCN}), the long short-term memory network \citep{hochreiter1997long} (denoted by \textbf{LSTM}) , the convolutional tensor-train LSTM \citep{su2020convolutional} (denoted by \textbf{Conv-TT-LSTM}), and tensor regression layer \citep{kossaifi2020tensor} (denoted by \textbf{TRL}).

\smallskip
\noindent \textbf{Evaluation Metrics.} We evaluate the performance of various methods with emphasis on two aspects: prediction accuracy and computational efficiency. 
To evaluate prediction accuracy, we compute the Mean Squared Error (MSE) over the testing data, i.e., $\text{MSE} = (n_{\text{test}}\cdot p_1\cdots p_q)^{-1}\sum_{i\in\mathcal{D}_{\text{test}}} \| \mathcal{Y}_i^{(\text{obs})} - \mathcal{Y}_i^{(\text{pred})} \|_F^2$, 
where $\mathcal{D}_{\text{test}}$ denotes the testing set, $n_{\text{test}}$ is the number of samples in the testing set $\mathcal{D}_{\text{test}}$,  and \( (\mathcal{Y}_i^{(\text{obs})}, \mathcal{Y}_i^{(\text{pred})} ) \) are the observed and predicted values of the $i$-th tensor response.
In addition, we record the computational time of different methods to evaluate computational efficiency.

\subsection{Simulation Studies} \label{subsec:simulation}

\noindent\textbf{Data Generating Process.} We carry out simulation studies under different scenarios with various configurations of sample size, data dimensions, tensor ranks, and relationships between covariates and responses. Recall that $\cX_t\in\R^{d_1\times \cdots\times d_K}$, $\cF_t\in\R^{r_1\times \cdots\times r_K}$, $\cY_t\in\R^{p_1\times \cdots\times p_q}$, for $1\le t\le n$. We begin with generating the low-rank core tensor $\cF_t$ from a tensor autoregressive model ${\rm vec}(\cF_t) = \Phi \cdot {\rm vec}(\cF_{t-1}) + {\rm vec}(\cW_t)$ for $t=1,\ldots,n$. Here, ${\rm vec}(\cF_t)$ denotes the vectorization\footnote{We'd like to clarify that our proposed FATTNN approach does not involve tensor vectorization in the model fitting. This vectorization is solely for the convenience of data generation in simulation.} of a tensor $\cF_t$, $\cW_t \in \R^{r_1\times \cdots\times r_K}$ is an error tensor in which every element is randomly generated from a normal distribution. The coefficient matrix $\Phi \in \R^{ (r_1 r_2\cdots r_K) \times (r_1 r_2\cdots r_K) }$ is constructed using the Kronecker product of a sequence of matrices $Q_k$, i.e., $\Phi = \odot_{k=1}^{K} Q_k$, in which each $Q_k \in \R^{r_k \times r_k}$ is a matrix with i.i.d. standard normal entries and orthonormalized through QR decomposition.  The tensor time series $\cF_t$ is randomly initiated in a distance past, and we discard the first 500 time points to stabilize the process. 

Next, we generate the tensor covariates $\cX_t$ from $\cF_t$,  following
$\mathcal{X}_t = \lambda \mathcal{F}_t \times_1 A_1 \times_2 \ldots \times_K A_K  +\cE_t,$ for $t=1, \ldots n$,
where $\lambda$ is a scalar that controls the signal-to-noise ratio in the tensor factor model. In our experiments, we set $\lambda = (\prod_{k=1}^K r_k)^{1/2}$. The loading matrices $A_k\in\R^{d_k\times r_k}$ are generated with independent $N(0,1)$ entries, and then orthonormalized using QR decomposition. The error $\cE_t\in\R^{d_1\times \cdots\times d_K}$ is also generated with each element independently drawn from normal distributions. 

Then, we generate the tensor responses $\cY_t$ from $\cF_t$, using $\cY_t = \langle S(\cF_t),  \Lambda \rangle_L + \cU_t$, $t=1, \ldots n$. Here $S(\cF_t) := (s(\cF_{t,i_1,\ldots,i_K}))_{1\leq i_1 \leq r_1, \ldots, 1 \leq i_K \leq r_K} \in \R^{r_1\times \cdots\times r_K}$ applies an element-wise transformation $s(\cdot)$ to each entry of $\cF_t$. $\cU_t \in \R^{p_1\times \cdots\times p_q}$ is a noise matrix whose elements are i.i.d $N(0, \sigma_u^2)$. The coefficient tensor $\Lambda$ is set as $\Lambda =\llbracket U_1,\cdots,U_K, V_1,\cdots V_d \rrbracket \in \R^{r_1 \times \cdots \times r_K \times p_1 \times \cdots \times p_q}$ where $ U_k \in \R^{r_k \times R} $ for $ k = 1,\cdots , K $ and $ V_m \in\R^{p_m \times R} $ for $ m = 1, \cdots , d$, each with independent $N(0, 1)$ entries. In our experiments, we set $R=6$.
Details of the notation $\llbracket \ \cdot, \cdots, \cdot \ \rrbracket$ and $\langle \cdot, \cdot \rangle_L$ are provided in Equations \eqref{eq:append-notation1} and \eqref{eq:append-notation2}.

We consider the following experimental configurations: \\
(1) {\small $(d_1, d_2,  d_3)=(25, 25, 12)$,$(r_1, r_2,  r_3) = (3, 3, 2)$, $(p_1, p_2,  p_3) = (6, 8, 6)$, $n=500$, $s(z)=cos(z)$, $\sigma_u^2 = 1$.} \\
(2) {\small $(d_1, d_2,  d_3) = (30, 6, 12)$, $(r_1, r_2,  r_3) = (6, 3, 2)$, $(p_1, p_2,  p_3) = (8, 6, 4)$, $n=400$, $s(z)=log(|z|)$, $\sigma_u^2 = 1$.} \\ 
(3) {\small $(d_1, d_2,  d_3) = (12, 3, 12)$, $(r_1, r_2,  r_3) = (4, 3, 4)$,  $(p_1, p_2,  p_3) = (3, 3, 3)$, $n=100$, $s(z)=log(e^{(z)}+1)$, $\sigma_u^2 = 0.5$.}

\smallskip
\noindent\textbf{Simulation Evaluations.} In each setting, we use 70\% for training the models and 30\% for model evaluation. Comparisons of MSE, and computational time are summarized in Table \ref{tab:simulation}. 
All reported metrics are averaged over 20 replications. Experiments were run on the research computing cluster provided by the University Center for Research Computing. From Table 1, we can see that FATTNN consistently improves over the rest benchmark methods by a large margin. The improvements can be attributed to two facets: (a) the utilization of tensor factors, summarizing useful predictive information effectively and reducing the number of noise covariates, and (b) the utilization of neural networks, possessing extraordinary ability in capturing complex relationships in the data and therefore exhibiting exceptional predictive power. 
On a separate note, Table \ref{tab:simulation}
reveals the evident computational advantage of FATTNN compared to other neural network models such as TCN, owing to the effective dimension reduction through tensor factor models.

\begin{table}
    \centering
    \captionsetup{justification=centering} 
    \caption{\footnotesize Comparisons of prediction accuracy (measured by MSE over the testing data) and computational time (in the format of hh:mm:ss) in simulation studies}  \label{tab:simulation}
    \footnotesize
    \begin{tabular}{c|c|c|c}    
    \hline
    Prediction Task & Simulation 1 & Simulation 2 & Simulation 3
     \\ \hline
    & \multicolumn{3}{c}{Test MSE} \\ 
    TCN & 9.812 (3.13) & 670.5 (25.8) & 1074  (32.6)\\
    FATTNN & \textbf{6.353} (2.52) & \textbf{511.9} (22.5) & \textbf{696.7} (26.4)\\
    MultiWay & 12.31 (3.51) &805.1 (28.4)  & 1670  (40.3) \\
    LSTM & 10.48 (3.23) & 878.3 (29.6)  & 807.8 (28.3) \\
    TRL& 12.29 (3.50) & 852.6 (29.0) & 2003 (38.4) \\
     Conv-TT-LSTM &10.32 (3.15) & 1643 (39.5)& 2812 (52.8)\\
    \hline
    & \multicolumn{3}{c}{Computational Time} \\ 
    TCN & 01:22:02& 01:26:30& 00:11:16\\
    FATTNN & 00:16:18& 00:17:54& 00:03:41\\
    MultiWay & 00:20:15&00:12:30& 00:01:03\\
    LSTM & 00:03:36& 00:03:01& 00:02:09\\
    TRL& 00:02:25& 00:01:30& 00:00:50\\
    Conv-TT-LSTM &03:02:15&05:31:25& 00:35:37\\
    \hline
    \end{tabular}

    Note: The results are averaged over 20 replications. Numbers in the brackets are standard errors. 
\label{tab:simulationAccuracy}
\end{table}
\subsection{Real Data Examples}
We evaluate the performance of different methods using five prediction tasks on three real-world datasets. 
The information about sample sizes in the datasets, data dimensions of covariates $(d_1, \ldots, d_K)$ and responses $(p_1, \ldots, p_q)$, and the ranks of core tensors $(r_1, \ldots, r_K)$ utilized in the analyses are summarized in Table \ref{tab:realdateinfo}. 
For the prediction tasks using the New York Taxi data and FMRI image datasets, we split the data into training sets (70\%) and testing (30\%). For the prediction tasks using the FAO dataset, we use 80\% data for training and 20\% for testing due to its relatively small sample size. We compare the prediction accuracy and computation time of six methods illustrated in the simulation section. The numerical prediction results along with the corresponding computational time are summarized in Table \ref{tab:realDataAccuracy}.

\begin{table*}[!htb]
    \centering
    \caption{\footnotesize Data information of the real-world applications}
    \footnotesize
    \begin{tabular}{c|c|c|c|c}
    \hline
    Prediction Task & Sample Size & Dimension of $\mathcal{X}_t$ & Dimension of $\mathcal{Y}_t$ & Rank of $\cF_t$
     \\ \hline
    FAO crop $\sim$ crop      &    62 &   (33,2,11) &   (13,2,11) &   (6,2,4) \\
    FAO crop $\sim$ livestock &    62 &    (26,4,5) &   (26,3,11) &   (6,4,2) \\
    NYC taxi -- midtown       &   504 & (2,12,12,8) & (12,12,8)& (2,4,4,2) \\
    NYC taxi -- downtown      &   504 & (2,19,19,8) & (19,19,8)& (2,4,4,2) \\
    FMRI data           &  1452 &  (25,64,64) &   (1,64,64) & (8,23,23) \\
    \hline
    \end{tabular}
\label{tab:realdateinfo}
\end{table*}

\begin{table*}
\centering
\captionsetup{justification=centering} 
    \caption{\footnotesize Comparisons of prediction accuracy (measured by MSE over the testing data) and computational time \\ (in the format of hh:mm:ss) of different methods 
    in real-world applications }
    \small
    \resizebox{\textwidth}{!}{
    \begin{tabular}{c|cccccccccccccc}
    \hline
    Prediction Task
    & TCN  & FATTNN & Multiway & TRL & LSTM& Conv-TT-LSTM
     \\ \hline
    
    &  \multicolumn{6}{c}{Test MSE} \\
    FAO crop $\sim$ crop      &  14.11 [10.83, 17.58]  &\textbf{7.437} [6.18, 11.44]  & 12.29 [11.86, 32.45] & 17.84 [15.89, 22.74] & 15.76 [13.16, 18.34] &53.39 [52.41, 63.95] \\ 
    FAO  crop$\sim$ livestock &  32.53 [24.87, 36.10]  &\textbf{21.65} [19.56, 25.10] & 48.04  [45.56, 165.80]  & 66.25 [56.85, 76.13] & 34.23 [30.47, 38.48] & 69.61 [69.06, 78.44] \\ 
    NYC taxi -- midtown       &  12.87 [12.06, 15.52] &\textbf{7.399} [6.53, 8.03] & 22.84 [20.86, 44.79]  & 8.565 [7.89, 10.12] & 8.404 [7.95, 9.63] & 25.80 [21.55, 28.62] \\ 
    NYC taxi -- downtown      &  14.58 [13.36, 17.42]  &\textbf{9.493} [8.46, 11.03] & 28.10 [23.99, 52.12] & 12.43 [10.82, 14.50] & 12.45 [11.09, 14.38] & 42.18 [36.27, 47.66] \\ 
    FMRI data                 & 0.1236 [0.0997, 0.148]  & 0.06634 [0.053, 0.075] & 0.1397 [0.135, 0.158] &0.07902 [0.072, 0.084] & 0.1964  [0.163, 0.224] &\textbf{0.03871} [0.0337, 0.0914]\\
    \hline
    &  \multicolumn{6}{c}{Computational Time} \\ 
    FAO crop $\sim$ crop       & 00:55:33 & 00:15:28 & 00:03:18 & 00:09:41 & 00:02:08&00:20:07 \\ 
    FAO  crop$\sim$ livestock  & 00:24:34 & 00:12:57 & 00:03:21 & 00:07:53 & 00:01:55&00:16:18 \\ 
    NYC taxi -- midtown        & 01:57:15 & 00:34:34 & 00:04:52 & 00:19:26 & 00:05:01&01:54:28 \\ 
    NYC taxi -- downtown       & 03:25:55 & 00:32:46 & 00:25:35 & 00:22:09 & 00:05:34&03:07:05 \\ 
    FMRI data                  & 03:54:50 & 00:20:58 & 10:01:20 & 00:43:51 & 00:06:27&17:27:05\\
    \hline
    \end{tabular}
    }

    Note: Numbers in the square brackets are bootstrap confidence intervals (CIs) of test MSE over 100 bootstrapping replications. 
\label{tab:realDataAccuracy}
\end{table*}

\noindent\textbf{(1) The United Nations Food and Agriculture Organization (FAO) Crops and Livestock Products Data.}
The database provides agricultural statistics (including crop, livestock, and forestry sub-sectors) collected from countries and territories since 1961. It is publicly available at \url{https://www.fao.org}. 
Our analyses focus on 11 crops and 5 livestock products from 46 countries in East Asia, North America, South America, and Europe from 1961 to 2022. Detailed information about the types of crops, livestock, and countries are presented in Table \ref{tab:FAO-statistics} of Appendix B. 
We study two prediction tasks:
(i) using Yield and Production data of 11 different crops 
for 33 countries in East Asia, North America, and Europe (i.e., $\cX_t \in \mathbb{R}^{33\times2\times11}$) to predict the Yield and Production quantities of the same crops for 13 selected countries in South America (i.e., $\cY_t \in \mathbb{R}^{13\times2\times11}$); 
and (ii) using four agricultural statistics (Producing Animals, Animals-slaughtered, Milk Animals, Laying) associated with 5 kinds of livestock of 26 selected countries in Europe (i.e., $\cX_t \in \mathbb{R}^{26\times4\times5}$) to predict three metrics (Area-harvested, Production, and Yield) of 11 crops in the same countries (i.e., $\cY_t \in \mathbb{R}^{26\times3\times11}$). 
Observing high variability in the raw data, we adopt a log transformation on the raw series and fit our models using log-transformed data.

\begin{figure*}[tb]
\centering
\includegraphics[width=0.45\textwidth, height = 1.7in]{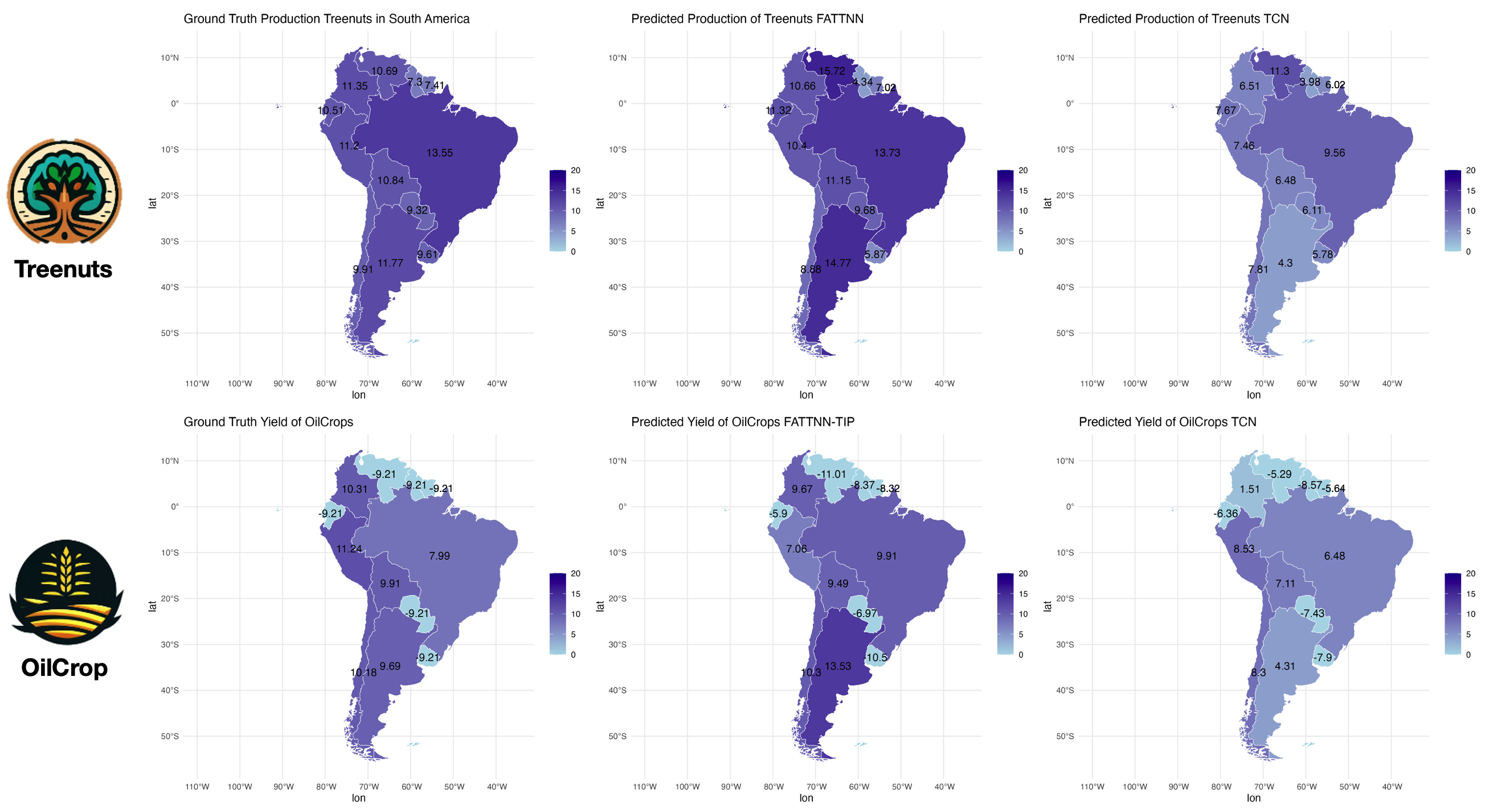} \quad
    \includegraphics[width=0.45\textwidth, height = 1.7in]{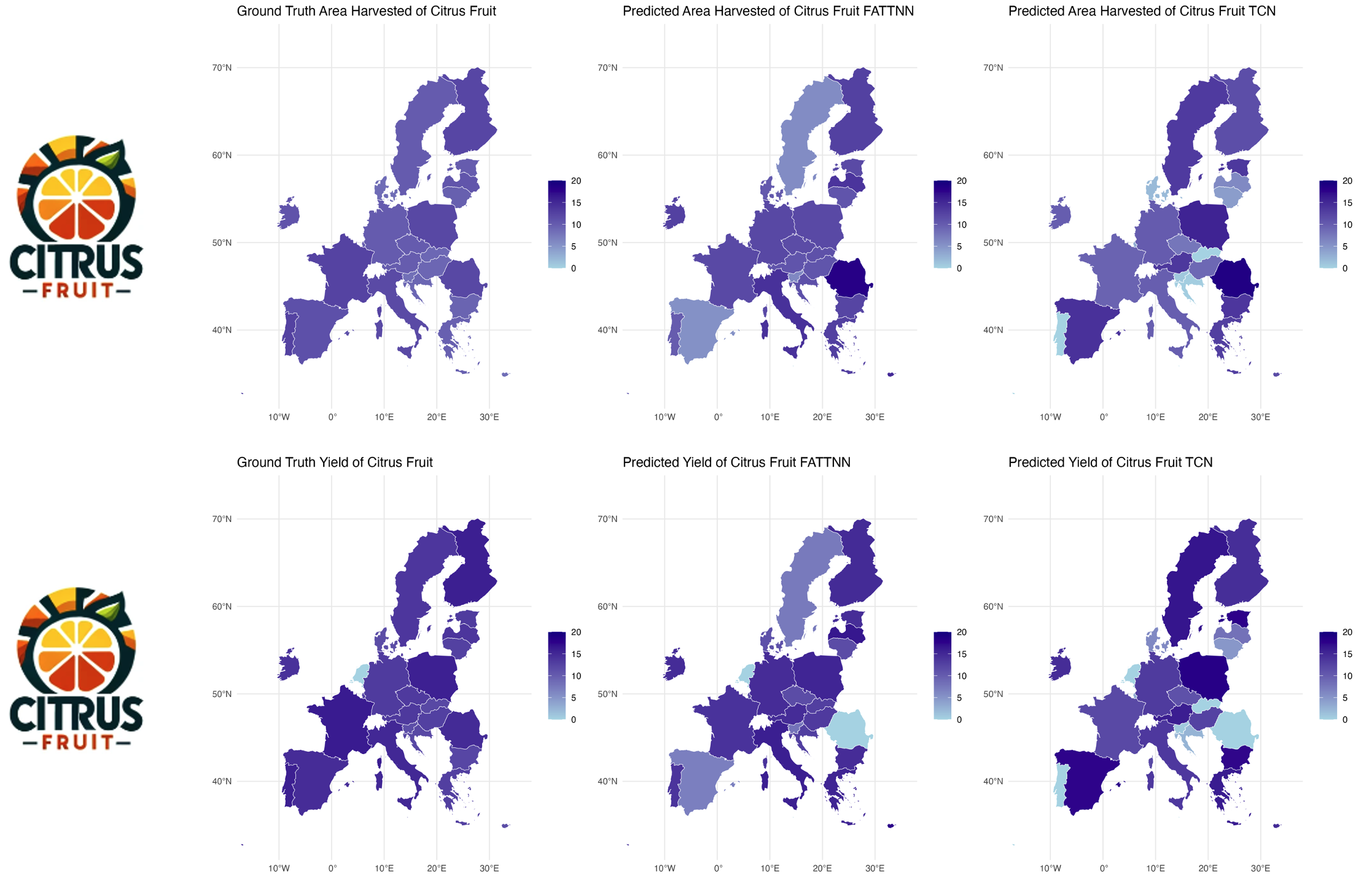} 
    \caption{\footnotesize Left Panel: Comparisons between ground-truth values and predicted Production of Treenuts (top row) and Yield of Oilcrops (bottom row) in South America using the crop data of 33 countries in East Asia, North America, and Europe; Right Panel: Comparisons between ground-truth values and predicted Area-harvested (top row) and Yield (bottom row) of citrus fruit in 26 selected countries in Europe using livestock data of the same 26 countries. Values are plotted on the log-transformed scale. 
    For readability, numerical values from the figures are tabulated in Appendix B. 
    In each panel, from left to right: Ground truth,  FATTNN, and TCN. } 
    \label{fig:FAO}
\end{figure*}

Table \ref{tab:realDataAccuracy} shows that FATTNN brings in substantial reductions in MSE compared to other methods. In task (i), FATTNN yields \textbf{47.3\%, 39.5\%, 58.3\%, 52.8\%, and 86.1\% reductions in MSE compared to TCN, Multiway, TRL, LSTM, and Conv-TT-LSTM}, respectively, and in task (ii), the improvements are \textbf{33.4\%, 54.9\%, 67.3\%, 36.7\%, and 68.9\% reductions in MSE}, respectively. 
The table also shows that the upper bound of the bootstrap CI for FATTNN is smaller than the lower bounds of the CIs for almost all the benchmarks, indicating FATTNN possesses a statistically significant increase in prediction accuracy. What's more, FATTNN produces the shortest CI, implying the lowest variability among the methods compared. 
As reflected by the table, FATTNN \textbf{computes much faster than TCN, about 3.6 times faster in task (i) and 2 times faster in task (ii)}. We also provide graphical comparisons between ground truth and predictions from different methods; see Figure \ref{fig:FAO}.
The plots confirm that FATTNN yields results that most closely align with the ground-truth patterns.

\noindent\textbf{(2) New York City (NYC) Taxi Trip Data.}
The data contains 24-hour taxi pick-up and drop-off information of 69 areas in New York City for all the business days in 2014 and 2015. It is publicly available at \url{https://www.nyc.gov}. 
Considering the autoregressive nature of this taxi data, we include the lag-1 response as an additional covariate when fitting the models. 
We focus on two prediction tasks: using the pick-up and drop-off data from 6:00-14:00 to predict the pick-up and drop-off outcomes from 14:00-22:00 in 12 districts in Midtown Manhattan and 19 districts in Downtown Manhattan, respectively. A detailed Manhattan district map is provided in Figure \ref{fig:ManhattanMap} of Appendix B. 
The Midtown prediction task yields a tensor covariate and response $(\cX_t, \cY_t) \in \mathbb{R}^{2\times12\times12\times8} \times \mathbb{R}^{12\times12\times8}$ for each business day, in which the 1st dimension stands for the observed information at time $t$, concatenated with lag 1 response, the 2nd and 3rd dimensions encode the pick-up and drop-off districts, and the 4th dimension represents the hour of day. Similarly, the Downtown prediction task involves $(\cX_t, \cY_t) \in \mathbb{R}^{2\times19\times19\times8} \times \mathbb{R}^{19\times19\times8}$for each business day.

From Table \ref{tab:realDataAccuracy}, we observe that FATTNN significantly outperforms all other approaches in terms of prediction accuracy, with \textbf{42.5\%, 67.6\%, 13.6\%, 12.0\%, and 71.3\% reductions in MSE compared to TCN, Multiway, TRL, LSTM, and Conv-TT-LSTM}, respectively, in Midtown traffic prediction. In Downtown traffic prediction, FATTNN shows 34.9\%, 66.2\%, 23.6\%, 23.7\%, and 77.5\% reductions in MSE compared to TCN, Multiway, TRL, LSTM, and Conv-TT-LSTM, respectively. As shown in the table, FATTNN is much more computationally efficient than TCN which directly feeds tensor observations into a convolutional network, about 6.4 times faster than TCN in Downtown prediction and 3.4 times faster in Midtown prediction. Additionally, Figure \ref{fig:taxi} visualizes the comparisons using a 5-day moving average of ground-truth values and predicted pick-up and drop-off volumes by day (where we sum the number of pick-up and drop-off across 8-hour testing periods). 
The FATTNN-predicted lines appear to be closer to the ground-truth lines than TCN-predicted lines in most plots.

\begin{figure}[!htb]
    \centering
    \includegraphics[width=0.42\textwidth]{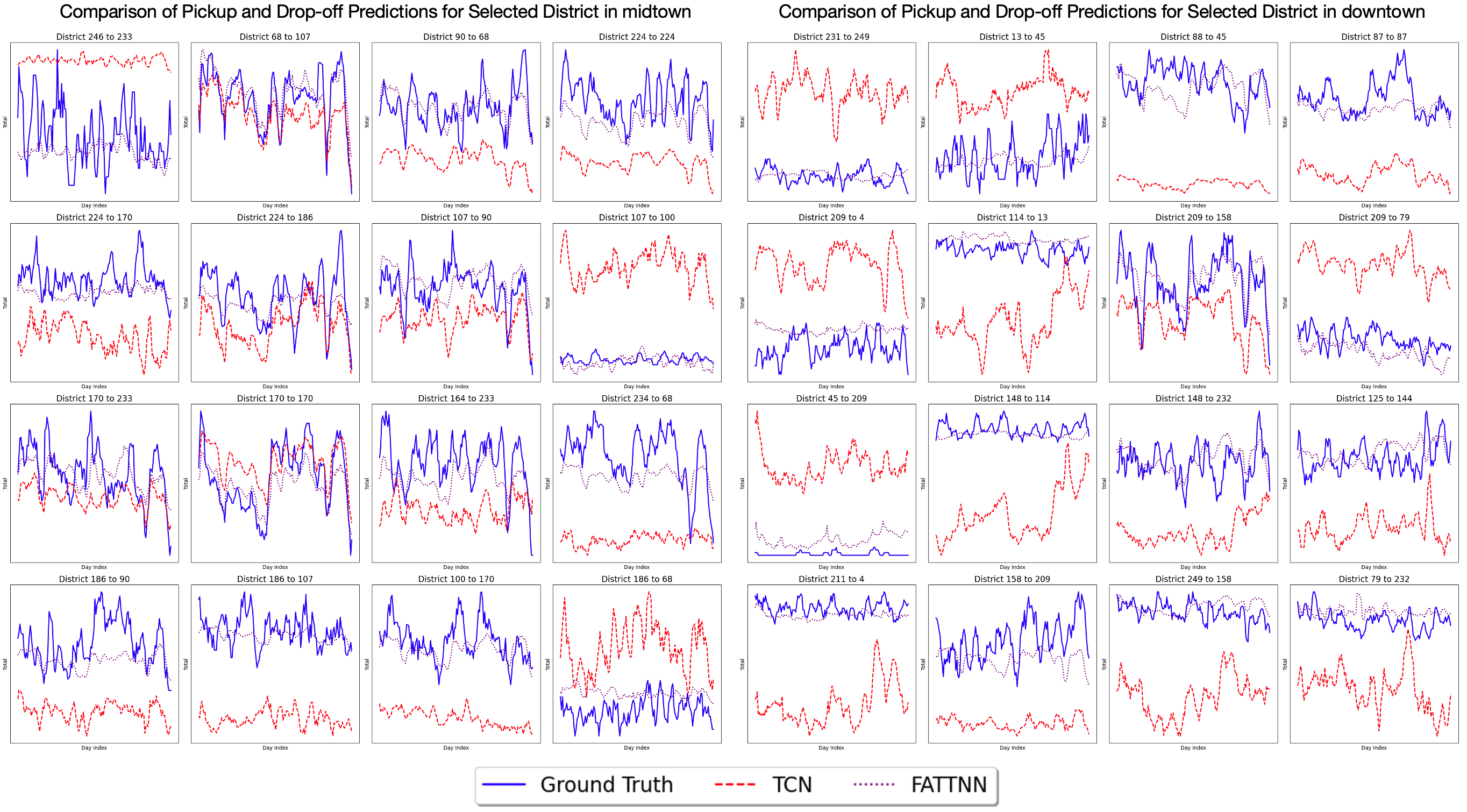} 
    \caption{\footnotesize Comparisons between ground-truth values and predicted pick-up and drop-off volumes using various methods. ``District A to B'' denotes the traffic volume that passengers were picked up in District A and dropped off in District B. The district numbering is assigned according to the Manhattan district map shown in Figure \ref{fig:ManhattanMap}. An enlarged figure and more detailed descriptions are in Appendix B. Left: Midtown Manhattan; Right: Downtown Manhattan. 
    }
\label{fig:taxi}
\end{figure}
\noindent\textbf{(3) Functional Magnetic Resonance Imaging (FMRI) Data.} 
We consider the Haxby dataset \citep{haxby2001distributed}, a well-known public dataset for research in brain imaging and cognitive neuroscience, which can be retrieved from the ``NiLearn'' Python library. 
The data contains slices of $64\times 64$ FMRI brain images collected from 1452 samples. We use the first 25 slices of each sample to predict the 26th slice of that sample, i.e., $(\mathcal{X}_{t}, \mathcal{Y}_{t})\in \mathbb{R}^{25\times64\times64} \times \mathbb{R}^{1\times64\times64}$.

Conv-TT-LSTM yields the lowest MSE in this task. This is not surprising as Conv-TT-LSTM is specifically designed for image prediction while FATTNN is more suitable for time-series tensor prediction. Other than Conv-TT-LSTM, 
FATTNN still outperforms other methods. Additionally, though Conv-TT-LSTM has the best prediction accuracy, it is quite computationally heavy - about 50 times slower than FATTNN. The FATTNN is also 2 times faster than the TRL method which has the 3rd lowest MSE in this task.

\smallskip
\noindent \textbf{In summary}, our proposed FATTNN performs the best in the four prediction tasks on FAO and NYC datasets. In the FMRI dataset, the FATTNN yields higher MSE than Conv-TT-LSTM, a method specialized in image prediction. That being said, FATTNN has a huge computational advantage over Conv-TT-LSTM. We also observe that Conv-TT-LSTM underperforms other methods in the FAO and NYC datasets by a large margin, when the prediction tasks are not focusing on image data. To summarize, the FATTNN demonstrates the overall best performance on all the prediction tasks, and strikes an ideal balance between prediction accuracy and computational cost. The FATTNN achieves substantial increases in prediction accuracy compared to benchmark methods. 
At the same time, the computational costs of FATTNN methods are much lower compared to TCN and Conv-TT-LSTM, ranging from 2 to 50 times faster.

\section{Summary \& Discussion} \label{sec:summary}

In this paper, we propose FATTNN, a hybrid method that integrates a tensor factor model into deep neural networks, for tensor-on-tensor time series forecasting. The key advantage of FATTNN is its ability to exploit and utilize the underlying tensor structure among the covariate tensors.
Through a low-rank tensor factor representation, we preserve the tensor structures and, in the meantime, drastically reduce the data dimensionality, leading to enhanced prediction results and reduced computational costs. This approach unifies the strengths of both factor models and neural networks. 
Numerical studies confirm that our proposed methods successfully \emph{achieve substantial increases in prediction accuracy and significant reductions in computational time}, compared to traditional linear tensor-on-tensor regression models (which often fail to capture the intricate nonlinearity in covariate-response relationships), conventional deep learning methods (that often operate in a black-box manner without making use of the inherent structures among observations), and state-of-the-art tensor-decomposition-based learning methods.

\smallskip
\noindent \textbf{Limitation and Extension.}  
Though we presented our models for the purpose of tensor-on-tensor time series forecasting, \emph{our proposed FATTNN framework is not confined to temporal data but can accommodate a variety of data types}. 
The main idea of FATTNN is to use a low-rank tensor factor model to capture the intrinsic patterns among the observed covariates, and then proceed with a neural network to model the intricate relationships between covariates and responses. When handling time series tensors, we adopt TIPUP factorization and TCN to fully capitalize on their temporal nature.
For non-temporal data, the TIPUP-TCN-based FATTNN is still applicable for tensor-on-tensor prediction tasks, but the strengths of TIPUP and TCN may not be fully utilized. For higher-order tensors (e.g. $K > 4$ ), Tensor-Train \citep{oseledets2011tensor} and Hierarchical Tucker decompositions \citep{lubich2013dynamical} may be better suited.
This limitation can be easily rectified: 
with appropriate choices of factor models and neural network architectures, the proposed FATTNN can be naturally adapted to any simple (e.g., i.i.d observations) or complex tensor-type data (e.g., image, network, text, video). 
\emph{To extend beyond time series data, different types of factor models and neural network architectures should be deployed, depending on the nature of the data.} For example, for graph data, Graph Neural Network is a more suitable alternative to TCN. For text data, Transformer models could substitute for the TCN in our FATTNN framework. An extended discussion on the generality of the FATTNN framework is presented in Appendix C. Our proposed FATTNN is flexible in accommodating different types of tensor factor models and deep learning architectures for tensor-on-tensor prediction using diverse data types.

\section{Acknowledgments}
We thank the anonymous reviewers for helpful suggestions. The work of Yuefeng Han is supported in part by National Science Foundation Grants DMS-2412578.  The work of Xiufan Yu is supported in part by National Institutes of Health grant R01GM152812.


\bibliography{ref-tensor}

\appendix
\setcounter{table}{0}
\setcounter{figure}{0}
\setcounter{algorithm}{0}
\setcounter{equation}{0}
\renewcommand{\thefigure}{S.\arabic{figure}}
\renewcommand{\thetable}{S.\arabic{table}}
\renewcommand{\thealgorithm}{S.\arabic{algorithm}}
\renewcommand{\theequation}{S.\arabic{equation}}

\newpage
\section{Appendix A: Tensor Factor Models} \label{append:tensor-factor-model}
\subsection{Notation}
For a vector $x=(x_1,...,x_p)^\top$, define $\|x\|_q = (x_1^q+...+x_p^q)^{1/q}$, $q\ge 1$. For a matrix $A = (a_{ij})\in \RR^{m\times n}$, write the SVD as $A=U\Sigma V^\top$, where $\Sigma=\text{diag}(\sigma_1(A), \sigma_2(A), ..., \sigma_{\min\{m,n\}}(A))$, with 
singular values $\sigma_{\max}(A) = $ $\sigma_1(A)\ge\sigma_2(A)\ge \cdots\ge \sigma_{\min\{m,n\}}(A)\ge 0$ in descending order. The matrix spectral norm is denoted as
$\|A\|_{2}=\sigma_1(A).$ Write $\sigma_{\min}(A)$ the smallest nontrivial singular value of $A$.
For two sequences of real numbers $\{a_n\}$ and $\{b_n\}$, write $a_n=O(b_n)$ (resp. $a_n\asymp b_n$) if there exists a constant $C$ such that $|a_n|\leq C |b_n|$ (resp. $1/C \leq a_n/b_n\leq C$) for all sufficiently large $n$, and write $a_n=o(b_n)$ if $\lim_{n\to\infty} a_n/b_n =0$. Write $a_n\lesssim b_n$ (resp. $a_n\gtrsim b_n$) if there exist a constant $C$ such that $a_n\le Cb_n$ (resp. $a_n\ge Cb_n$). Write $a\wedge b=\min\{a,b\}$ and $a\vee b=\max\{a,b\}$. We use $C, C_1,c,c_1,...$ to denote generic constants, whose actual values may vary from line to line. 

For any two $m\times r$ matrices with orthonormal columns, say, $U$ and $\widehat U$, suppose the singular values of $U^\top \widehat U$ are $\sigma_1\ge \sigma_2 \ge \cdots \ge \sigma_r\ge 0$.
Define the principal angles between $U$ and $\widehat U$ as
$$ \Theta(U,\widehat U)=\text{diag} (\arccos(\sigma_1),\arccos(\sigma_2),...,\arccos(\sigma_r)). $$
A natural measure of distance between the column spaces of $U$ and $\widehat U$ is then
\begin{equation}\label{loss}
\|\widehat U\widehat U^\top - UU^\top\|_{2}= \|\sin \Theta(U,\widehat U) \|_{2}=\sqrt{1-\sigma_r^2}, 
\end{equation}
which equals to the sine of the largest principal angle
between the column spaces of $U$ and $\widehat U$. 
For any two matrices $A\in\RR^{m_1\times r_1},B\in \RR^{m_2\times r_2}$, denote the Kronecker product $\odot$ as $A\odot B\in \RR^{m_1 m_2 \times r_1 r_2}$. For any two tensors $\cA\in\RR^{m_1\times m_2\times \cdots \times m_K}, \cB\in \RR^{r_1\times r_2\times \cdots \times r_N}$, denote the tensor product $\otimes$ as $\cA\otimes \cB\in \RR^{m_1\times \cdots \times m_K \times r_1\times \cdots \times r_N}$, such that
$$(\cA\otimes\cB)_{i_1,...,i_K,j_1,...,j_N}=(\cA)_{i_1,...,i_K}(\cB)_{j_1,...,j_N} .$$
Let ${\rm{vec}}(\cdot)$ be the vectorization of matrices and tensors. The mode-$k$ unfolding (or matricization) is defined as ${\rm{mat}}_k(\cA)$, which maps a tensor $\cA$ to a matrix 
${\rm{mat}}_k(\cA)\in\RR^{m_k\times m_{-k}}$ where $m_{-k}=\prod_{j\neq k}^K m_j$.  For example, if $\cA\in\RR^{m_1\times m_2\times m_3}$, then
$$({\rm{mat}}_1(\cA))_{i,(j+m_2(k-1))}= ({\rm{mat}}_2(\cA))_{j,(k+m_3(i-1))} $$ $$= ({\rm{mat}}_3(\cA))_{k,(i+m_1(j-1))} =\cA_{ijk}.  $$
For tensor $\cA\in\RR^{m_1\times m_2\times \cdots \times m_K}$, the Hilbert Schmidt norm or Frobenius norm is defined as
$$ \|\cA\|_{{\rm F}}=\sqrt{\sum_{i_1=1}^{m_1}\cdots\sum_{i_K=1}^{m_K}(\cA)_{i_1,...,i_K}^2 }. $$

For vectors $a_1, \ldots, a_K$ of length $r_1, \ldots, r_K$, the outer product
\begin{equation*}
    \cA = a_1 \circ a_2 \circ \cdots \circ a_K \in \mathbb{R}^{r_1 \times \cdots \times r_K}
\end{equation*}
is the $K$-th order tensor, with entries
$\cA[i_1, \ldots, i_K] = \prod_{k=1}^K a_k[i_k]$.

Let $M_1 \in \R^{r_1\times R}, \ldots M_K \in \R^{r_K\times R}$ be $K$ matrices with the same column dimension (denoted by $R$), the notation $\llbracket M_1, \cdots, M_K \rrbracket$ represents 
\begin{equation}\label{eq:append-notation1}
\llbracket M_1, \ldots, M_K \rrbracket = \sum_{r=1}^{R} m_{1r} \circ \cdots \circ m_{Kr},
\end{equation}
where \(m_{kr}\) is the \(r\)th column of \(M_k\), $k=1,\ldots, K$, $r= 1, \ldots, R$.

For two tensor \( \mathcal{A}: d_1 \times \cdots \times d_K \times r_1 \times \cdots \times r_K \) and \( \mathcal{B}: r_1 \times \cdots \times r_K \times p_1 \times \cdots \times p_q \), the notation $\langle \cA, \cB \rangle_L$ denote the contracted tensor product \citep{lock2018tensor}

\begin{equation*}
    \langle \mathcal{A}, \mathcal{B} \rangle_L  \in \R^ {d_1 \times \cdots \times d_K \times p_1 \times \cdots \times p_q}
\end{equation*}
with the $(i_1, \ldots, i_K, j_1, \ldots, j_d)$-th element of $\langle \mathcal{A}, \mathcal{B} \rangle_L$ defined as
\begin{align}\label{eq:append-notation2}
\begin{aligned}
    & \langle \mathcal{A}, \mathcal{B} \rangle_L[i_1, \ldots, i_K, j_1, \ldots, j_d] = \\ & \sum_{l_1=1}^{r_1} \cdots \sum_{l_K=1}^{r_K} \mathcal{A}[i_1, \dots, i_K, l_1, \dots, l_K] \mathcal{B}[l_1, \dots, l_K, j_1, \dots, j_d].
\end{aligned}
\end{align}

\subsection{Preliminary of matrix and tensor algebra}
\noindent \textbf{Fact (Norm inequalities)}. 
\begin{itemize}
\item For any matrix $A, B$, $\|AB\| \geq \|A\| \cdot \sigma_{\min}(B)$.
\item For any matrix $A$, square invertible matrix $B$, $\|A\| \leq \sigma_{\max}(B) \cdot \|AB^{-1}\|$.
\item For any matrix $A$, and square diagonal invertible matrix $B$, $\sigma_{\min}(BA) \geq \sigma_{\min}(B) \cdot \sigma_{\min}(A)$.
\end{itemize}

\begin{lemma}[Weyl's inequality] \label{thm:weyl}
Let $A,B \in \R^{m\times n} $ where $n \geq k$, then we have for any $i \in [k]$,
\begin{equation}
    |\sigma_i(A)-\sigma_i(B)| \leq \|A-B\| .
\end{equation}
\end{lemma}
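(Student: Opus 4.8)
The plan is to obtain Weyl's inequality for singular values from the Courant--Fischer min--max variational characterization of singular values combined with the triangle inequality for the Euclidean vector norm. Recall that the right-hand side is the spectral norm $\|A-B\| = \sigma_1(A-B) = \max_{\|x\|_2 = 1}\|(A-B)x\|_2$, so the entire argument reduces to comparing the quantities $\|Ax\|_2$ and $\|Bx\|_2$ as $x$ ranges over common unit vectors and suitable subspaces.

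First I would record the variational formula: for a matrix $M\in\R^{m\times n}$ and any index $i$ with $1\le i\le n$,
$$\sigma_i(M) = \max_{\substack{S\subseteq\R^n\\ \dim S = i}}\ \min_{\substack{x\in S\\ \|x\|_2=1}}\ \|Mx\|_2.$$
This is precisely where the hypothesis $n\ge k$ is used: it ensures $i\le k\le n$, so that $i$-dimensional subspaces $S\subseteq\R^n$ exist and the formula is valid for every $i\in[k]$ (when $i>m$ both sides equal zero, which is consistent). If one does not wish to take this formula as known, it follows from the spectral min--max theorem applied to the symmetric positive semidefinite matrix $M^\top M$ via $\sigma_i(M)^2 = \lambda_i(M^\top M)$, or directly from the SVD by taking $S$ to be the span of the top-$i$ right singular vectors as the maximizer and controlling any competing subspace by a dimension-counting intersection argument.

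Next, fix $i\in[k]$ and let $S^\star\subseteq\R^n$ be an $i$-dimensional subspace attaining the outer maximum for $A$, so that $\sigma_i(A)=\min_{x\in S^\star,\,\|x\|_2=1}\|Ax\|_2$. For every unit vector $x\in S^\star$, the triangle inequality gives
$$\|Ax\|_2 \le \|Bx\|_2 + \|(A-B)x\|_2 \le \|Bx\|_2 + \|A-B\|.$$
Taking the minimum over unit $x\in S^\star$ and using that $S^\star$ is one admissible competitor in the outer maximum defining $\sigma_i(B)$ yields
$$\sigma_i(A) \le \min_{x\in S^\star,\,\|x\|_2=1}\|Bx\|_2 + \|A-B\| \le \sigma_i(B) + \|A-B\|.$$
Interchanging the roles of $A$ and $B$ and using $\|A-B\|=\|B-A\|$ gives $\sigma_i(B)\le\sigma_i(A)+\|A-B\|$, and combining the two inequalities produces $|\sigma_i(A)-\sigma_i(B)|\le\|A-B\|$, which is the claim.

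The computation itself is short, so the main obstacle is bookkeeping rather than estimation: one must apply the \emph{same} variational formula (with subspaces of the common domain $\R^n$) to both $A$ and $B$, so that the maximizing subspace $S^\star$ for $A$ is automatically admissible as a competitor for $B$, and one must keep the index range within $i\le n$ where the principle is valid --- which is exactly what $n\ge k$ guarantees. The only genuinely technical ingredient is the variational characterization itself, which I would either cite or derive from the SVD as indicated above.
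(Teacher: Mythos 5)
Your proof is correct, but note that the paper itself does not prove this lemma at all: Weyl's inequality for singular values is stated in Appendix A as a classical fact (used only once, in Step 4 of the proof of Proposition 1 to lower-bound $\sigma_{r_1}[\Theta_1(\widehat A_2^{(m)})]$), so there is no in-paper argument to match. What you supply is the standard textbook derivation, and it is sound: the Courant--Fischer characterization $\sigma_i(M)=\max_{\dim S=i}\min_{x\in S,\,\|x\|_2=1}\|Mx\|_2$ over subspaces $S\subseteq\R^n$ is correctly reduced to the eigenvalue min--max theorem for $M^\top M$ via $\sigma_i(M)^2=\lambda_i(M^\top M)$; your bookkeeping that both matrices share the domain $\R^n$, so the maximizing subspace $S^\star$ for $A$ is an admissible competitor for $B$, is exactly the point where such arguments usually go wrong, and you handle it; and you correctly dispose of the degenerate range $m<i\le n$, where both sides vanish (any $i$-dimensional $S$ then meets $\ker M$, since $\dim\ker M\ge n-m$). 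One small presentational caveat: the inequality ``taking the minimum over unit $x\in S^\star$'' is cleanest if you evaluate at the minimizer $x_0$ of $\|Bx\|_2$ over $S^\star$, i.e.\ $\sigma_i(A)\le\|Ax_0\|_2\le\|Bx_0\|_2+\|A-B\|_2$, which is what your chain implicitly does. For comparison, an equally standard alternative avoids the variational principle for singular values altogether: apply the eigenvalue form of Weyl's inequality to the symmetric dilations $\left(\begin{smallmatrix}0 & A\\ A^\top & 0\end{smallmatrix}\right)$ and $\left(\begin{smallmatrix}0 & B\\ B^\top & 0\end{smallmatrix}\right)$, whose nonnegative eigenvalues are exactly the singular values; this buys the result as a corollary of the Hermitian case, whereas your route is self-contained given the SVD. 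Either would be acceptable where the paper simply cites the fact.
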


\begin{lemma}[Davis–Kahan $\sin\Theta$ theorem]\label{thm:sin}
Let $\Sigma, \hat{\Sigma} \in \mathbb{R}^{p \times p}$ be symmetric, with eigenvalues $\lambda_1 \geq \cdots \geq \lambda_p$ and $\hat{\lambda}_1 \geq \cdots \geq \hat{\lambda}_p$ respectively. Fix $1 \leq r \leq s \leq p$, let $d = s - r + 1$, and let $V = (v_r, v_{r+1}, \ldots, v_s) \in \mathbb{R}^{p \times d}$ and $\hat{V} = (\hat{v}_r, \hat{v}_{r+1}, \ldots, \hat{v}_s) \in \mathbb{R}^{p \times d}$ have orthonormal columns satisfying $\Sigma v_j = \lambda_j v_j$ and $\hat{\Sigma} \hat{v}_j = \hat{\lambda}_j \hat{v}_j$ for $j = r, r + 1, \ldots, s$. Write $\delta = \inf\{|\hat{\lambda} - \lambda| : \lambda \in [\lambda_s, \lambda_r], \hat{\lambda} \in (-\infty, \hat{\lambda}_{s+1}] \cup [\hat{\lambda}_{r-1}, \infty)\}$, where we define $\hat{\lambda}_0 = -\infty$ and $\hat{\lambda}_{p+1} = \infty$, and assume that $\delta > 0$. Then
\begin{equation}
    \|\sin \Theta(\hat{V}, V)\| \leq \frac{\|\hat{\Sigma} - \Sigma\|}{\delta},
\end{equation}
where the norm $\|\cdot\|$ could be spectral norm $\|\cdot\|_2$ or Frobenious norm $\|\cdot\|_{\rm F}$. Frequently in applications, we have $r = s = j$, say, in which case we have
\begin{equation}
\|\sin \Theta(\hat{v}_j, v_j)\|_2 \leq \frac{\|\hat{\Sigma} - \Sigma\|_{2}}{\min(|\hat{\lambda}_{j-1} - \lambda_j|, |\hat{\lambda}_{j+1} - \lambda_j|)}.
\end{equation}
\end{lemma}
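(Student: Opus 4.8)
The plan is to reduce the statement to a Sylvester-equation identity together with a spectral-separation estimate, and to establish the norm bound for the spectral and Frobenius cases in parallel. First I would record the geometric identity that turns $\sin\Theta$ into an off-diagonal block. Writing the singular value decomposition $\hat V^\top V = P(\cos\Theta)Q^\top$, whose singular values are the cosines of the principal angles, the columns of $V$ decompose as $V = \hat V(\hat V^\top V) + \hat V_\perp(\hat V_\perp^\top V)$, and orthonormality of the columns of $V$ forces the singular values of $\hat V_\perp^\top V$ to be exactly $\sin\theta_1,\ldots,\sin\theta_d$. Hence $\|\sin\Theta(\hat V, V)\| = \|\hat V_\perp^\top V\|$ for both the spectral and the Frobenius norm, so it suffices to bound $M := \hat V_\perp^\top V$.

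Second, I would derive the controlling identity. Let $\Lambda_1 = \mathrm{diag}(\lambda_r,\ldots,\lambda_s)$ collect the eigenvalues attached to $V$, and let $\hat\Lambda_\perp$ be the diagonal matrix of the remaining eigenvalues $\{\hat\lambda_j : j<r \text{ or } j>s\}$ attached to $\hat V_\perp$, so that $\Sigma V = V\Lambda_1$ and $\hat\Sigma\hat V_\perp = \hat V_\perp\hat\Lambda_\perp$. Sandwiching $\hat\Sigma - \Sigma$ and using these eigen-relations gives
$$ \hat V_\perp^\top(\hat\Sigma - \Sigma)V = \hat V_\perp^\top\hat\Sigma V - \hat V_\perp^\top\Sigma V = \hat\Lambda_\perp M - M\Lambda_1 . $$
Because $\hat V_\perp$ and $V$ have orthonormal columns, the left-hand side satisfies $\|\hat V_\perp^\top(\hat\Sigma-\Sigma)V\| \le \|\hat\Sigma-\Sigma\|$ in either norm, by submultiplicativity of the spectral norm and its compatibility with the Frobenius norm. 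So the theorem reduces to the separation estimate $\|\hat\Lambda_\perp M - M\Lambda_1\| \ge \delta\,\|M\|$.

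Third, the separation estimate. Since $\hat\Lambda_\perp$ and $\Lambda_1$ are diagonal, $(\hat\Lambda_\perp M - M\Lambda_1)_{ij} = (\hat\lambda_{(i)}-\lambda_{(j)})M_{ij}$, and by the definition of $\delta$ every multiplier obeys $|\hat\lambda_{(i)}-\lambda_{(j)}|\ge\delta$. For the Frobenius norm this is immediate: $\|\hat\Lambda_\perp M - M\Lambda_1\|_F^2 = \sum_{i,j}(\hat\lambda_{(i)}-\lambda_{(j)})^2 M_{ij}^2 \ge \delta^2\|M\|_F^2$. For the spectral norm I would split $\hat V_\perp = [\hat V_0,\hat V_2]$ into the sub-blocks whose eigenvalues lie, respectively, at least $\delta$ above and at least $\delta$ below $[\lambda_s,\lambda_r]$; on each block the two spectra are one-sidedly separated, and testing $\hat\Lambda_0 M_0 - M_0\Lambda_1$ against the top singular triple $(u,w,\sigma)$ of $M_0$ (so $M_0 u=\sigma w$, $M_0^\top w=\sigma u$, $\sigma=\|M_0\|_2$) yields $w^\top(\hat\Lambda_0 M_0 - M_0\Lambda_1)u = \sigma\big(w^\top\hat\Lambda_0 w - u^\top\Lambda_1 u\big) \ge \delta\sigma$, with the analogous bound on the lower block.

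The main obstacle is the two-sided spectral-norm separation: since $\hat\Lambda_\perp$ carries eigenvalues both above and below $[\lambda_s,\lambda_r]$, the single-vector test used in the one-sided case can cancel, and the naive block split only gives $\|\hat\Lambda_\perp M - M\Lambda_1\|_2 \ge \delta\|M\|_2/\sqrt 2$. To recover the sharp constant I would invoke the separation lemma valid for every unitarily invariant norm — equivalently, the Schur-multiplier bound asserting that Hadamard multiplication by $\big(1/(\hat\lambda_{(i)}-\lambda_{(j)})\big)$ has operator norm at most $1/\delta$ whenever the two eigenvalue sets are $\delta$-separated reals — which is the single ingredient I would import from the matrix-analysis literature rather than reprove. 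Chaining the three steps gives $\|\sin\Theta(\hat V,V)\| = \|M\| \le \delta^{-1}\|\hat\Lambda_\perp M - M\Lambda_1\| = \delta^{-1}\|\hat V_\perp^\top(\hat\Sigma-\Sigma)V\| \le \delta^{-1}\|\hat\Sigma-\Sigma\|$, and the special case $r=s=j$ follows by specializing $\delta$ to $\min(|\hat\lambda_{j-1}-\lambda_j|,|\hat\lambda_{j+1}-\lambda_j|)$.
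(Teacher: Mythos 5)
The paper never proves this lemma: it is imported as a classical result (the Davis--Kahan $\sin\Theta$ theorem in the form popularized by Yu, Wang and Samworth), stated in the appendix alongside Weyl's inequality and used directly in the proof of Proposition 1. So there is no in-paper argument to compare against, and your proposal must stand on its own; in outline it reproduces the standard literature proof correctly: the identity $\|\sin\Theta(\hat V,V)\|=\|\hat V_\perp^\top V\|$ in both norms, the Sylvester relation $\hat V_\perp^\top(\hat\Sigma-\Sigma)V=\hat\Lambda_\perp M-M\Lambda_1$ with $M=\hat V_\perp^\top V$, the contraction $\|\hat V_\perp^\top(\hat\Sigma-\Sigma)V\|\le\|\hat\Sigma-\Sigma\|$, and a separation estimate. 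Your Frobenius-norm separation is immediate and correct, and your one-sided spectral-norm argument via the top singular triple of each block is also correct.

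One genuine flaw to fix: the key lemma as you state it is false. Hadamard multiplication by $\bigl(1/(\hat\lambda_{(i)}-\lambda_{(j)})\bigr)$ does \emph{not} have norm at most $1/\delta$ ``whenever the two eigenvalue sets are $\delta$-separated reals''; for arbitrarily interleaved $\delta$-separated real spectra the optimal constant is $\pi/(2\delta)$ (Bhatia--Davis--McIntosh), and interleaving is exactly the two-sided configuration you are trying to control after observing the $\delta/\sqrt2$ loss, so a literal invocation of your stated lemma would import a false statement. What is true, and what rescues the chain, is the bound with constant $1/\delta$ under the original Davis--Kahan geometry: one spectrum contained in an interval $[a,b]$ and the other contained in $\mathbb{R}\setminus(a-\delta,b+\delta)$, valid for every unitarily invariant norm. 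The definition of $\delta$ in the lemma supplies precisely this hypothesis: $\lambda_r,\ldots,\lambda_s\in[\lambda_s,\lambda_r]$, while every eigenvalue of $\hat\Lambda_\perp$ is either $\le\hat\lambda_{s+1}$ or $\ge\hat\lambda_{r-1}$ and hence lies at distance at least $\delta$ from that interval, i.e.\ is $\le\lambda_s-\delta$ or $\ge\lambda_r+\delta$. With the imported lemma restated under this interval-versus-enlarged-complement hypothesis, your three steps chain to $\|M\|\le\delta^{-1}\|\hat\Sigma-\Sigma\|$, and the $r=s=j$ specialization follows as you say (note that $\delta>0$ forces $\hat\lambda_{j+1}<\lambda_j<\hat\lambda_{j-1}$, which is what makes $\min(|\hat\lambda_{j-1}-\lambda_j|,|\hat\lambda_{j+1}-\lambda_j|)$ the value of the infimum).
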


\subsection{Technical lemmas}
\begin{lemma}[$\epsilon$-covering of matrix norms, \citet{han2020tensor}]\label{lemma:epsilonnet}
Let $d, d_j, d_*, r\le d\wedge d_j$ be positive integers, $\epsilon>0$ and
$N_{d,\epsilon} = \lfloor(1+2/\epsilon)^d\rfloor$. \\
(i) For any norm $\|\cdot\|$ in $\R^d$, there exist
$M_j\in \R^d$ with $\|M_j\|\le 1$, $j=1,\ldots,N_{d,\epsilon}$,
such that $\max_{\|M\|\le 1}\min_{1\le j\le N_{d,\epsilon}}\|M - M_j\|\le \epsilon$.
Consequently, for any linear mapping $f$ and norm $\|\cdot\|_*$,
$$
\sup_{M\in \R^d,\|M\|\le 1}\|f(M)\|_* \le 2\max_{1\le j\le N_{d,1/2}}\|f(M_j)\|_*.
$$
(ii) Given $\epsilon >0$, there exist $U_j\in \R^{d\times r}$
and $V_{j'}\in \R^{d'\times r}$ with $\|U_j\|_{2}\vee\|V_{j'}\|_{2}\le 1$ such that
$$
\max_{M\in \R^{d\times d'},\|M\|_{2}\le 1,\text{rank}(M)\le r}\ 
$$
$$
\min_{j\le N_{dr,\epsilon/2}, j'\le N_{d'r,\epsilon/2}}\|M - U_jV_{j'}^\top\|_{2}\le \epsilon.
$$
Consequently, for any linear mapping $f$ and norm $\|\cdot\|_*$ in the range of $f$,
\begin{align}
\begin{aligned}
\label{lm-3-2}
& \sup_{M, \widetilde M\in \R^{d\times d'}, \|M-\widetilde M\|_{2}\le \epsilon
\atop{\|M\|_{2}\vee\|\widetilde M\|_{2}\le 1\atop
\text{rank}(M)\vee\text{rank}(\widetilde M)\le r}}
\frac{\|f(M-\widetilde M)\|_*}{\epsilon 2^{I_{r<d\wedge d'}}} \\
\le & \sup_{\|M\|_{2}\le 1\atop \text{rank}(M)\le r}\|f(M)\|_*
\le 2\max_{1\le j \le N_{dr,1/8}\atop 1\le j' \le N_{d'r,1/8}}\|f(U_jV_{j'}^\top)\|_*.
\end{aligned}
\end{align}
(iii) Given $\epsilon >0$, there exist $U_{j,k}\in \R^{d_k\times r_k}$
and $V_{j',k}\in \R^{d'_k\times r_k}$ with $\|U_{j,k}\|_{2}\vee\|V_{j',k}\|_{2}\le 1$ such that $\forall k\le K$

\small
\begin{align*}
\max_{M_k\in \R^{d_k\times d_k'},\atop \|M_k\|_{2}\le 1, \text{rank}(M_k)\le r_k}
\min_{j_k\le N_{d_kr_k,\epsilon/2} \atop j'_k\le N_{d_k'r_k,\epsilon/2}}
\Big\|\odot_{k=2}^K M_k - \odot_{k=2}^K(U_{j_k,k}V_{j_k',k}^\top)\Big\|_{\rm op}    
\end{align*}

$$
\le \epsilon (K-1).
$$
For any linear mapping $f$ and norm $\|\cdot\|_*$ in the range of $f$,
\begin{align}
\begin{aligned}
\label{lm-3-3}
& \sup_{M_k, \widetilde M_k\in \R^{d_k\times d_k'},\|M_k-\widetilde M_k\|_{2}\le\epsilon\atop
{\text{rank}(M_k)\vee\text{rank}(\widetilde M_k)\le r_k \atop \|M_k\|_{2}\vee\|\widetilde M_k\|_{2}\le 1\ \forall k\le K}}
\frac{\|f(\odot_{k=2}^KM_k-\odot_{k=2}^K\widetilde M_k)\|_*}{\epsilon(2K-2)} \\ 
 & \le \sup_{M_k\in \R^{d_k\times d_k'}\atop {\text{rank}(M_k)\le r_k \atop \|M_k\|_{2}\le 1, \forall k}}
\Big\|f\big(\odot_{k=2}^K M_k\big)\Big\|_*
\end{aligned}
\end{align}
and
\begin{align}
\begin{aligned}
\label{lm-3-4}
& \sup_{M_k\in \R^{d_k\times d_k'},\|M_k\|_{2}\le 1\atop \text{rank}(M_k)\le r_k\ \forall k\le K}
\Big\|f\big(\odot_{k=2}^K M_k\big)\Big\|_* \\
\le & 2\max_{1\le j_k \le N_{d_kr_k,1/(8K-8)}\atop 1\le j_k' \le N_{d_k'r_k,1/(8K-8)}}
\Big\|f\big(\odot_{k=2}^K U_{j_k,k}V_{j_k',k}^\top\big)\Big\|_*.
\end{aligned}
\end{align}
\end{lemma}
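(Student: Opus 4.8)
The plan is to build the result in three stages mirroring parts (i)--(iii): start from an ordinary $\epsilon$-net for a finite-dimensional normed space, then lift it to low-rank matrices, and finally to Kronecker products. For part (i) I would first establish the covering number bound by the standard volume (packing--covering) argument: a maximal $\epsilon$-separated subset $\{M_j\}$ of the unit ball $\{\|M\|\le 1\}$ is automatically an $\epsilon$-net, and since the disjoint balls $M_j+(\epsilon/2)\{\|M\|\le 1\}$ all sit inside $(1+\epsilon/2)\{\|M\|\le 1\}$, comparing volumes bounds their number by $(1+2/\epsilon)^d=N_{d,\epsilon}$. The ``consequently'' statement follows from a self-bounding argument: writing $L=\sup_{\|M\|\le 1}\|f(M)\|_*$ (finite by compactness) and choosing $M_j$ with $\|M-M_j\|\le 1/2$, linearity gives $\|f(M)\|_*\le\|f(M_j)\|_*+\|f(M-M_j)\|_*$, and since $2(M-M_j)$ lies in the unit ball the last term is at most $L/2$, so $L\le\max_j\|f(M_j)\|_*+L/2$, i.e. $L\le 2\max_j\|f(M_j)\|_*$.

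For part (ii) the key observation is that the spectral norm is a genuine norm on $\R^{d\times r}\cong\R^{dr}$, so part (i) supplies spectral-norm $(\epsilon/2)$-nets of the factor balls of cardinalities $N_{dr,\epsilon/2}$ and $N_{d'r,\epsilon/2}$. I would factor an arbitrary rank-$\le r$, spectral-norm-$\le 1$ matrix as $M=U_0V_0^\top$ through its SVD with $\|U_0\|_2,\|V_0\|_2\le 1$, approximate $U_0,V_0$ by net elements, and control the product error by the bilinear bound $\|U_0V_0^\top-U_jV_{j'}^\top\|_2\le\|U_0-U_j\|_2\|V_0\|_2+\|U_j\|_2\|V_0-V_{j'}\|_2\le\epsilon$. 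The left inequality in \eqref{lm-3-2} comes from noting that $M-\widetilde M$ has rank $\le 2r$ and spectral norm $\le\epsilon$; splitting its SVD into two rank-$\le r$ blocks of spectral norm $\le\epsilon$ and using linearity of $f$ produces the factor $2^{I_{r<d\wedge d'}}$ (no split is needed once $r\ge d\wedge d'$). The right inequality repeats the self-bounding argument, but now the net at scale $1/8$ yields a residual of spectral norm $\le 1/4$ that, after the rank-doubling factor, contributes $\le L/2$ and again closes to $2\max$.

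For part (iii) the new ingredient is the telescoping identity
\begin{equation*}
\odot_{k=2}^K M_k-\odot_{k=2}^K\widetilde M_k=\sum_{l=2}^K\Big(\odot_{k=2}^{l-1}M_k\Big)\odot(M_l-\widetilde M_l)\odot\Big(\odot_{k=l+1}^K\widetilde M_k\Big),
\end{equation*}
combined with multiplicativity of the spectral norm under Kronecker products, $\|A\odot B\|_2=\|A\|_2\|B\|_2$. Approximating each factor to within $\epsilon$ via part (ii) and summing the $K-1$ telescoping terms yields the stated bound $\epsilon(K-1)$; splitting each difference factor $M_l-\widetilde M_l$ into two rank-$\le r_l$ pieces then gives \eqref{lm-3-3} with the constant $2(K-1)=2K-2$, and the self-bounding argument with net scale $1/(8K-8)$ (so that the residual contributes $\le L/4$) produces \eqref{lm-3-4}.

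The hard part will be the bookkeeping in part (iii): I must verify that each telescoping term is itself an \emph{admissible} Kronecker product of rank-$\le r_k$, spectral-norm-$\le 1$ factors, so that it can be absorbed into the right-hand supremum over $\odot_{k}M_k$, and then track the constants $2^{I_{r<d\wedge d'}}$, $2K-2$, and the net scales $1/8$ and $1/(8K-8)$ consistently, so that the final self-bounding inequality closes with the stated constant $2$.
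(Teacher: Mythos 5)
Your proposal is correct and follows essentially the same route as the source: the paper itself states this lemma without proof, importing it from \citet{han2020tensor}, and your reconstruction --- volume-comparison nets for part (i), SVD factorization $M=U_0V_0^\top$ with the bilinear error bound and rank-doubling split for part (ii), and the Kronecker telescoping identity with per-factor admissibility plus the self-bounding closure for part (iii) --- is exactly the standard argument behind it. The only blemish is a harmless arithmetic slip at the very end: with nets at scale $1/(8K-8)$ each factor is approximated within $1/(4K-4)$, so \eqref{lm-3-3} makes the residual contribute $\tfrac{1}{4K-4}(2K-2)L = L/2$ rather than $L/4$, which still closes the self-bounding inequality to give the stated factor $2$.
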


\begin{lemma}[Slepian’s inequality, \citet{vershynin2018high}]
Let $(X_t)_{t \in T}$ and $(Y_t)_{t \in T} \in T$ be two mean zero Gaussian Processes. Assume that for all $t,s \in T$, we have
\begin{equation}
\mathbb{E} X_t^2 = \mathbb{E} Y_t^2 \quad \text{and} \quad \mathbb{E}[(X_t - X_s)^2] \leq \mathbb{E}[(Y_t - Y_s)^2]. 
\end{equation}
Then for every $\tau \in R$, we have
\begin{equation}
    \mathbb{P} \left\{ \sup_{t \in T} X_t \geq \tau \right\} \leq \mathbb{P} \left\{ \sup_{t \in T} Y_t \geq \tau \right\}.
\end{equation}
Consequently,
\begin{equation}
    \mathbb{E} \sup_{t \in T} X_t \leq \mathbb{E} \sup_{t \in T} Y_t .
\end{equation}
\end{lemma}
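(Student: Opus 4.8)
The plan is to establish the probability comparison first and then deduce the expectation bound by integrating tails. Throughout, I reduce to a finite index set $T = \{1,\ldots,n\}$: the supremum over an arbitrary $T$ is an increasing limit of maxima over finite subsets, so both sides are controlled by monotone convergence and it suffices to compare the two centered Gaussian vectors $X = (X_1,\ldots,X_n)$ and $Y = (Y_1,\ldots,Y_n)$ with covariance matrices $\Sigma^X,\Sigma^Y$. Since $\mathbb{E}(X_i-X_j)^2 = \Sigma^X_{ii} - 2\Sigma^X_{ij} + \Sigma^X_{jj}$, the hypotheses translate into $\Sigma^X_{ii} = \Sigma^Y_{ii}$ for all $i$ together with $\Sigma^X_{ij} \ge \Sigma^Y_{ij}$ for every $i \neq j$. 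Replacing $X,Y$ by independent copies changes nothing, as only the marginal laws of $\sup_t X_t$ and $\sup_t Y_t$ are at stake.

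The main tool is Gaussian interpolation. With $X$ and $Y$ taken independent, set $Z(u) = \sqrt{u}\,X + \sqrt{1-u}\,Y$ for $u \in [0,1]$, so $Z(u)$ is centered Gaussian with covariance $u\Sigma^X + (1-u)\Sigma^Y$, and $Z(1) = X$, $Z(0) = Y$ in distribution. For a smooth, suitably bounded $f$ I form $\phi(u) = \mathbb{E} f(Z(u))$, differentiate under the expectation, and apply Gaussian integration by parts (Stein's identity) to each term, using $\mathrm{Cov}(X_i, Z_j(u)) = \sqrt{u}\,\Sigma^X_{ij}$ and $\mathrm{Cov}(Y_i, Z_j(u)) = \sqrt{1-u}\,\Sigma^Y_{ij}$. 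This yields the interpolation identity
$$\phi'(u) = \frac{1}{2} \sum_{i,j} \bigl(\Sigma^X_{ij} - \Sigma^Y_{ij}\bigr)\, \mathbb{E}\!\left[\partial_i \partial_j f(Z(u))\right].$$
Because $\Sigma^X_{ii} = \Sigma^Y_{ii}$, the diagonal terms ($i=j$) drop out, leaving only off-diagonal terms carrying the favourable sign $\Sigma^X_{ij} - \Sigma^Y_{ij} \ge 0$.

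To reach the probability statement I choose $f$ to approximate $\mathbf{1}\{\max_i x_i < \tau\}$ in product form, $f(x) = \prod_{i=1}^n g(x_i)$, where $g$ is a smooth, non-increasing approximation of $\mathbf{1}\{\,\cdot\, < \tau\}$. For such a product, the off-diagonal mixed partials are $\partial_i\partial_j f = g'(x_i)\,g'(x_j)\prod_{k \neq i,j} g(x_k) \ge 0$, since $g' \le 0$ makes the two derivative factors multiply to a non-negative number while the remaining factors lie in $[0,1]$. Hence every surviving summand in the interpolation identity is non-negative, so $\phi'(u) \ge 0$, giving $\phi(1) \ge \phi(0)$, i.e.\ $\mathbb{E} f(X) \ge \mathbb{E} f(Y)$. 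Letting the smoothing parameter vanish yields $\mathbb{P}(\max_i X_i < \tau) \ge \mathbb{P}(\max_i Y_i < \tau)$, which is precisely $\mathbb{P}(\sup_t X_t \ge \tau) \le \mathbb{P}(\sup_t Y_t \ge \tau)$ after passing back to general $T$. The expectation bound then follows from the layer-cake formula $\mathbb{E}\sup_t X_t = \int_0^\infty \mathbb{P}(\sup_t X_t > \tau)\,d\tau - \int_{-\infty}^0 \mathbb{P}(\sup_t X_t < \tau)\,d\tau$: the first integrand is dominated by its $Y$-counterpart, while $\mathbb{P}(\sup_t X_t < \tau) \ge \mathbb{P}(\sup_t Y_t < \tau)$ makes the (subtracted) second integral no smaller, so both contributions push the inequality in the same direction and $\mathbb{E}\sup_t X_t \le \mathbb{E}\sup_t Y_t$.

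The main obstacle is the rigorous justification of the interpolation identity: differentiating $\phi$ under the expectation and performing the Gaussian integration by parts require controlling the growth of $f$ and its first two derivatives, and the subsequent indicator approximation must be chosen compatibly with these smoothness demands while preserving the monotonicity of $g$. By contrast, the sign analysis of the mixed partials is the conceptual heart but is elementary once the product structure of $f$ and the cancellation of the diagonal terms (the role of the equal-variance hypothesis) are in place.
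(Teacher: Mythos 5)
The paper offers no proof of this lemma: it is quoted as a known result, with a citation to Vershynin (2018), \emph{High-Dimensional Probability}. Your argument is correct and is in substance the canonical proof from that cited source (Section 7.2 there): Gaussian interpolation $Z(u)=\sqrt{u}\,X+\sqrt{1-u}\,Y$ with independent copies, the identity $\phi'(u)=\tfrac{1}{2}\sum_{i,j}(\Sigma^X_{ij}-\Sigma^Y_{ij})\,\mathbb{E}[\partial_i\partial_j f(Z(u))]$, cancellation of the diagonal terms by the equal-variance hypothesis, the product test function $f(x)=\prod_i g(x_i)$ with $g$ smooth and non-increasing so that the off-diagonal mixed partials are non-negative, and finally stochastic domination of the suprema yielding the expectation bound. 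The technical caveats you flag (differentiation under the expectation, the smoothing limit, and separability of the process when $T$ is uncountable so that $\sup_t X_t$ is measurable and the finite-subset reduction is legitimate) are the right ones and are handled in the standard way; nothing in your outline would fail.
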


\begin{lemma}[Sudakov-Fernique inequality, \citet{vershynin2018high}] \label{thm:Sudakov}
Let \((X_t)_{t \in T}\) and \((Y_t)_{t \in T}\) be two mean-zero Gaussian processes. Assume that, for all \(t, s \in T\), we have
\begin{equation}
\mathbb{E}[(X_t - X_s)^2] \leq \mathbb{E}[(Y_t - Y_s)^2].
\end{equation}
Then, 
\begin{equation}
\mathbb{E} \left[ \sup_{t \in T} \{X_t\} \right] \leq \mathbb{E} \left[ \sup_{t \in T} \{Y_t\} \right].
\end{equation}

\end{lemma}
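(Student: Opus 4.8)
The plan is to prove this Gaussian comparison by the \emph{smart-path} (Gaussian interpolation) method, reducing first to a finite index set $T=\{1,\dots,n\}$; the general separable case then follows by taking suprema over finite subsets together with monotone convergence. I would realize $X=(X_1,\dots,X_n)$ and $Y=(Y_1,\dots,Y_n)$ as \emph{independent} centered Gaussian vectors on a common space, with covariances $\Sigma^X$ and $\Sigma^Y$, and interpolate via
\begin{equation*}
Z(s)=\sqrt{s}\,X+\sqrt{1-s}\,Y,\qquad s\in[0,1],
\end{equation*}
so that $Z(1)=X$, $Z(0)=Y$, and $\Cov(Z(s))=s\Sigma^X+(1-s)\Sigma^Y$. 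Since $\max$ is not smooth, I would work with the soft-max $f_\beta(z)=\beta^{-1}\log\sum_{i=1}^n e^{\beta z_i}$, which is smooth, satisfies $\max_i z_i\le f_\beta(z)\le \max_i z_i+\beta^{-1}\log n$, and whose first derivatives are the softmax weights $p_i=e^{\beta z_i}/\sum_k e^{\beta z_k}$ with Hessian $\partial_{ij}f_\beta=\beta(p_i\delta_{ij}-p_ip_j)$. The goal is to show that $s\mapsto \E[f_\beta(Z(s))]$ is nonincreasing, and then send $\beta\to\infty$.

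The central computation is the interpolation identity. Differentiating in $s$ and applying Gaussian integration by parts to the jointly Gaussian vector $(X,Y)$ — using $\E[X_iY_j]=0$ from independence together with $\partial Z_k/\partial X_j=\sqrt{s}\,\delta_{kj}$ and $\partial Z_k/\partial Y_j=\sqrt{1-s}\,\delta_{kj}$ — the singular factors $\sqrt{s}$ and $\sqrt{1-s}$ cancel and produce
\begin{equation*}
\frac{d}{ds}\,\E\!\left[f_\beta(Z(s))\right]=\tfrac12\sum_{i,j}\big(\Sigma^X_{ij}-\Sigma^Y_{ij}\big)\,\E\!\left[\partial_{ij}f_\beta(Z(s))\right].
\end{equation*}
Writing $\Delta_{ij}=\Sigma^X_{ij}-\Sigma^Y_{ij}$ and inserting the Hessian, the key algebraic step is the identity, valid for any symmetric $\Delta$ and any probability vector $p$,
\begin{equation*}
\sum_{i,j}\Delta_{ij}\,(p_i\delta_{ij}-p_ip_j)=\tfrac12\sum_{i,j}p_ip_j\big(\Delta_{ii}+\Delta_{jj}-2\Delta_{ij}\big).
\end{equation*}
Because $\Delta_{ii}+\Delta_{jj}-2\Delta_{ij}=\E[(X_i-X_j)^2]-\E[(Y_i-Y_j)^2]\le 0$ by hypothesis and $p_ip_j\ge 0$, the integrand is nonpositive \emph{pointwise}, so the derivative is $\le 0$. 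Integrating from $s=0$ to $s=1$ gives $\E[f_\beta(X)]\le \E[f_\beta(Y)]$, and letting $\beta\to\infty$ with the uniform bound on $f_\beta-\max$ yields $\E[\max_i X_i]\le \E[\max_i Y_i]$, which is the claim.

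I expect the main obstacle to be the rigorous justification of the Gaussian integration-by-parts step and of differentiating under the expectation. The soft-max has gradient entries in $[0,1]$ and Hessian entries bounded by $\beta$, so all the integrals converge and $f_\beta$ together with its first two derivatives grows slowly enough for Stein's identity $\E[G_i\,h(G)]=\sum_j\Cov(G_i,G_j)\,\E[\partial_j h(G)]$ to apply to $(X,Y)$. The only delicate point is the $1/\sqrt{s}$ and $1/\sqrt{1-s}$ blow-up of $dZ/ds$ at the endpoints; after the integration by parts these singular factors are exactly absorbed, leaving a derivative bounded uniformly on $(0,1)$, so $\E[f_\beta(Z(\cdot))]$ is Lipschitz on $[0,1]$ and the fundamental theorem of calculus applies. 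Worth noting, and in contrast to Slepian's inequality, is that only the increment condition is used and no matching-variance assumption enters, which is precisely what the pointwise sign of $\Delta_{ii}+\Delta_{jj}-2\Delta_{ij}$ encodes.
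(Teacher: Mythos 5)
The paper itself offers no proof of this lemma --- it is imported directly from \citet{vershynin2018high} (Theorem 7.2.11 there), and the proof in that reference is exactly the Gaussian-interpolation argument you give: independent realizations, $Z(s)=\sqrt{s}\,X+\sqrt{1-s}\,Y$, the soft-max $f_\beta$, the interpolation identity via Stein's lemma, and the sign of $\Delta_{ii}+\Delta_{jj}-2\Delta_{ij}$ followed by $\beta\to\infty$. Your proposal is correct in all details (including the key algebraic identity, the absorption of the $1/\sqrt{s}$, $1/\sqrt{1-s}$ endpoint factors after integration by parts, and the observation that no matching-variance assumption is needed, unlike Slepian), so it reproduces the cited source's proof essentially line for line.
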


A function $f:\R^{d}\to\R$ is $L$-Lipschitz continuous (sometimes called globally Lipschitz continuous) if there exists $L\in \R$ such that    
\begin{align*}
\|f(x)-f(y)\|\le L \| x -y \|.    
\end{align*}
The smallest constant $L > 0$ is denoted $\| f \|_{\rm Lip}$.
\begin{lemma}[Gaussian Concentration of Lipschitz functions, \citet{vershynin2018high}] \label{Lemma:Gaussian}
Let $X\sim N(0,I_d)$ and let $f:\R^{d}\to\R$ be a Lipschitz function. Then,
\begin{align*}
\P\left(f(X)-\E f(X) \ge t \right) \le 2 \exp\left( -\frac{t^2}{2 \|f\|_{\rm Lip}} \right)   .
\end{align*}
\end{lemma}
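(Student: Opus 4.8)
Write $L = \|f\|_{\rm Lip}$. The plan is to establish the sub-Gaussian moment generating function (MGF) bound
\[
\E\bigl[e^{\lambda(f(X)-\E f(X))}\bigr] \le e^{\lambda^2 L^2/2}, \qquad \lambda \in \R,
\]
and then conclude by the Chernoff method: for any $\lambda > 0$, Markov's inequality gives $\P(f(X)-\E f(X) \ge t) \le e^{-\lambda t}\,\E[e^{\lambda(f(X)-\E f(X))}] \le e^{-\lambda t + \lambda^2 L^2/2}$, and optimizing over $\lambda$ (taking $\lambda = t/L^2$) yields the one-sided tail $e^{-t^2/(2L^2)}$, which is dominated by the stated $2e^{-t^2/(2L^2)}$. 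The factor of $2$ is what one obtains by additionally applying the same bound to $-f$ to control the two-sided deviation; for the one-sided event as written it is a harmless overcount.

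First I would reduce to the case where $f$ is differentiable with $\|\nabla f(x)\| \le L$ everywhere. Since $f$ is merely Lipschitz it need not be differentiable, so I would mollify: convolving $f$ with an isotropic Gaussian density of small bandwidth $\epsilon$ produces $f_\epsilon$ that is smooth, still $L$-Lipschitz (convolution with a probability density does not increase the Lipschitz constant), hence satisfies $\|\nabla f_\epsilon\| \le L$, and converges to $f$ pointwise (uniformly on compacts) as $\epsilon \to 0$. Proving the MGF bound for each $f_\epsilon$ and passing to the limit via Fatou/dominated-convergence arguments (using Gaussian integrability to control $\E f_\epsilon \to \E f$) then transfers the bound to $f$.

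For the core MGF estimate I would use a Gaussian interpolation (smart-path) argument, in the same spirit as the Slepian and Sudakov--Fernique comparisons invoked above. Let $Y$ be an independent copy of $X$, so $\E f(X) = \E f(Y)$ and, by Jensen's inequality in the $Y$ variable, $\E_X[e^{\lambda(f(X)-\E f(Y))}] \le \E_{X,Y}[e^{\lambda(f(X)-f(Y))}]$. Along the rotation path $X_\theta = X\cos\theta + Y\sin\theta$, $\theta\in[0,\pi/2]$, interpolating between $Y$ and $X$, one has $f(X)-f(Y) = \int_0^{\pi/2}\langle \nabla f(X_\theta), X_\theta'\rangle\,d\theta$ with velocity $X_\theta' = -X\sin\theta + Y\cos\theta$. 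A second application of Jensen to the normalized $\theta$-integral, followed by taking expectations, reduces matters to bounding, for each fixed $\theta$, the conditional MGF of $\langle \nabla f(X_\theta), X_\theta'\rangle$. The key structural fact is that for fixed $\theta$ the pair $(X_\theta, X_\theta')$ consists of two independent standard Gaussian vectors; hence, conditioned on $X_\theta$, the scalar $\langle \nabla f(X_\theta), X_\theta'\rangle$ is centered Gaussian with variance $\|\nabla f(X_\theta)\|^2 \le L^2$, whose MGF is explicit and uniformly bounded. Reassembling the pieces yields a sub-Gaussian MGF bound.

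The main obstacle is this core MGF step. The interpolation route is self-contained but, because of the two Jensen applications and the $\pi/2$ reparametrization, yields a slightly suboptimal constant in the exponent; recovering the sharp constant $1/(2L^2)$ requires either a more careful interpolation or, alternatively, the Herbst argument built on the Gaussian logarithmic Sobolev inequality. In the latter approach I would apply the log-Sobolev inequality to $g = e^{\lambda f/2}$, use $\|\nabla f\| \le L$ to control the Dirichlet energy, and convert the resulting inequality into a first-order differential inequality for $K(\lambda) = \lambda^{-1}\log\E[e^{\lambda f}]$, whose integration from $\lambda = 0$ (where $K(0) = \E f$) gives exactly $\log\E[e^{\lambda(f-\E f)}] \le \lambda^2 L^2/2$. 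The trade-off is that this sharper route presupposes the Gaussian log-Sobolev inequality as an analytic input, whereas the interpolation argument needs only elementary Gaussian MGF computations.
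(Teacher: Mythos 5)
Your proposal is correct, and there is nothing in the paper to compare it against line by line: the paper states this lemma as a quoted result from \citet{vershynin2018high} and gives no proof, relying on it as an imported tool in the proof of Proposition \ref{thm:itipup}. What you reconstruct is essentially the standard proof from that source: the mollification reduction to smooth $f$ with $\|\nabla f\|\le L$, the Maurey--Pisier rotation interpolation $X_\theta=X\cos\theta+Y\sin\theta$ exploiting the independence of $(X_\theta,X_\theta')$ to get a sub-Gaussian MGF bound (with the suboptimal constant $\pi^2/8$ that you correctly flag), and the Herbst/log-Sobolev upgrade to the sharp $\log \E e^{\lambda(f-\E f)}\le \lambda^2L^2/2$. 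All steps are sound, including your observations that the one-sided tail needs no prefactor $2$ and that the factor $2$ covers the two-sided event.

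One point worth making explicit: what you prove is $\P(f(X)-\E f(X)\ge t)\le 2\exp\bigl(-t^2/(2\|f\|_{\rm Lip}^2)\bigr)$, with the Lipschitz constant \emph{squared} in the denominator, whereas the lemma as printed in the paper has $\|f\|_{\rm Lip}$ unsquared. The squared version is the correct one --- dimensionally the exponent must scale as $t^2/L^2$, and this is confirmed by how the paper uses the lemma: with $\|\Delta_1\|_2$ a $\bigl(n^{-1/2}\|\Theta_1\|_2^{1/2}\bigr)$-Lipschitz function of the noise and deviation $t=n^{-1/2}\|\Theta_1\|_2^{1/2}x$, the claimed tail $e^{-x^2/2}$ arises only from $t^2/L^2=x^2$. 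So the paper's statement contains a typo (missing square), and your proof establishes the version that the paper actually needs.
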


\begin{lemma}[Sigular values of Gaussian random matrices] \label{lemma:singular}
Let $X$ be an $n\times p$ matrix with i.i.d. $N(0, 1)$ entries. For any $x > 0$,
$\P\left( \left\| X^\top X  - n I_p \right\|_2 \ge p+ 2\sqrt{pn} + 2x(\sqrt{p}+\sqrt{n})+x^2 \right) \le  2 e^{- x^2/2}.$
\end{lemma}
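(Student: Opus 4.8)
The plan is to reduce the claim to two--sided control of the extreme singular values of $X$ and then combine a Gaussian min--max comparison with Gaussian concentration. First, since $X^\top X$ is symmetric with eigenvalues $\sigma_1(X)^2\ge\cdots\ge\sigma_p(X)^2$, the matrix $X^\top X-nI_p$ has eigenvalues $\sigma_i(X)^2-n$, so that $\|X^\top X-nI_p\|_2=\max_i|\sigma_i(X)^2-n|=\max\{|\sigma_{\max}(X)^2-n|,\,|\sigma_{\min}(X)^2-n|\}$, because $t\mapsto|t-n|$ is convex and hence maximized at an endpoint of $[\sigma_{\min}^2,\sigma_{\max}^2]$. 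I would therefore show that on a single event of probability at least $1-2e^{-x^2/2}$ one has $\sqrt n-\sqrt p-x\le\sigma_{\min}(X)\le\sigma_{\max}(X)\le\sqrt n+\sqrt p+x$. Squaring the upper bound gives $\sigma_{\max}(X)^2-n\le p+2\sqrt{pn}+2x(\sqrt p+\sqrt n)+x^2$, and a short case analysis (using $\sigma_{\min}(X)\ge 0$ when $\sqrt n-\sqrt p-x<0$) shows $|\sigma_{\min}(X)^2-n|$ is dominated by the same quantity, which is exactly the target.

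Second, to control the expectations of the extreme singular values I would use a Gaussian comparison. Writing $\sigma_{\max}(X)=\sup_{u\in\SS^{p-1},\,v\in\SS^{n-1}}\langle Xu,v\rangle$, set $X_{u,v}=\langle Xu,v\rangle$ and introduce the comparison process $Y_{u,v}=\langle g,u\rangle+\langle h,v\rangle$ with $g\sim N(0,I_p)$ and $h\sim N(0,I_n)$ independent. A direct computation gives $\E(X_{u,v}-X_{u',v'})^2=2-2\langle u,u'\rangle\langle v,v'\rangle$ and $\E(Y_{u,v}-Y_{u',v'})^2=4-2\langle u,u'\rangle-2\langle v,v'\rangle$, whose difference equals $2(1-\langle u,u'\rangle)(1-\langle v,v'\rangle)\ge0$; hence the increment hypothesis of the Sudakov--Fernique inequality (Lemma \ref{thm:Sudakov}) holds and $\E\sigma_{\max}(X)\le\E\sup_{u,v}Y_{u,v}=\E\|g\|_2+\E\|h\|_2\le\sqrt p+\sqrt n$ by Jensen. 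For the smallest singular value I would write $\sigma_{\min}(X)=\min_{u\in\SS^{p-1}}\max_{v\in\SS^{n-1}}\langle Xu,v\rangle$ and invoke Gordon's minimax Gaussian comparison (after matching variances by appending an independent standard Gaussian to $X_{u,v}$), which yields $\E\sigma_{\min}(X)\ge\sqrt n-\sqrt p$.

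Third, I would convert these expectation bounds into the high--probability event. By Weyl's inequality (Lemma \ref{thm:weyl}), each singular value is $1$-Lipschitz as a function of $X\in\R^{n\times p}$ equipped with the Frobenius norm, so the Gaussian concentration bound (Lemma \ref{Lemma:Gaussian}), applied in its sharp one-sided form, gives $\P(\sigma_{\max}(X)\ge\E\sigma_{\max}(X)+x)\le e^{-x^2/2}$ and $\P(\sigma_{\min}(X)\le\E\sigma_{\min}(X)-x)\le e^{-x^2/2}$. Combining with $\E\sigma_{\max}(X)\le\sqrt p+\sqrt n$ and $\E\sigma_{\min}(X)\ge\sqrt n-\sqrt p$ and taking a union bound produces the event of the first step with probability at least $1-2e^{-x^2/2}$, which completes the argument.

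The main obstacle I anticipate is the lower bound $\E\sigma_{\min}(X)\ge\sqrt n-\sqrt p$: unlike the top singular value, this is a min--max quantity that is not delivered by Sudakov--Fernique alone, and it requires Gordon's comparison theorem together with the variance-matching device, with care that the associated covariance inequalities point in the correct directions depending on whether the outer index agrees. A secondary subtlety is bookkeeping the probability constant: since the concentration lemma as stated carries a factor $2$ on a one-sided tail, to land exactly at $2e^{-x^2/2}$ I would instead use the sharp one-sided bound $e^{-x^2/2}$ for each of the two events before the union bound.
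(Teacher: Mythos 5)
Your proof is correct, but it does far more work than the paper does: the paper's entire proof of Lemma \ref{lemma:singular} is a one-line citation to Corollary 5.35 of \citet{vershynin2010introduction}, which states precisely the two-sided event you construct, namely $\sqrt{n}-\sqrt{p}-x \le \sigma_{\min}(X)\le\sigma_{\max}(X)\le \sqrt{n}+\sqrt{p}+x$ with probability at least $1-2e^{-x^2/2}$; the squaring and endpoint case analysis in your first step is the unstated ``easily derived'' part, and your second and third steps reproduce, essentially verbatim, Vershynin's own proof of that corollary (Sudakov--Fernique comparison against $Y_{u,v}=\langle g,u\rangle+\langle h,v\rangle$ for $\E\,\sigma_{\max}\le\sqrt{p}+\sqrt{n}$, Gordon's minimax comparison for $\E\,\sigma_{\min}\ge\sqrt{n}-\sqrt{p}$, and sharp one-sided Gaussian concentration for $1$-Lipschitz functions). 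Your increment computations check out: $\E(X_{u,v}-X_{u',v'})^2=2-2\langle u,u'\rangle\langle v,v'\rangle$ and the difference against the comparison process is $2(1-\langle u,u'\rangle)(1-\langle v,v'\rangle)\ge 0$, so Lemma \ref{thm:Sudakov} applies as you claim. What the citation buys is brevity; what your version buys is self-containedness, and you correctly isolate the two points where care is needed. First, Gordon's theorem is genuinely required for $\sigma_{\min}$ and is \emph{not} among the tools the paper states (its appendix lists only Slepian, Sudakov--Fernique, Weyl, Davis--Kahan, and the concentration lemma), so your argument imports one external ingredient beyond the paper's toolkit --- legitimate, but it should be cited explicitly rather than treated as ambient. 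Second, your observation about constants is exactly right: Lemma \ref{Lemma:Gaussian} as printed carries a prefactor $2$ on a one-sided tail (and in fact a typo --- the exponent should be $t^2/(2\|f\|_{\rm Lip}^2)$), so applying it naively to both tails would give $4e^{-x^2/2}$; using the sharp one-sided Borell--TIS bound $e^{-x^2/2}$ per tail, as you propose, is the correct repair to land on the stated constant. One small point worth making explicit in a final writeup: when $p>n$, the matrix $X^\top X$ has $p-n$ zero eigenvalues, so the relevant eigenvalue range is $[0,\sigma_{\max}^2]$ rather than $[\sigma_{\min}(X)^2,\sigma_{\max}^2]$; your case $\sqrt{n}-\sqrt{p}-x<0$, with the bound $n\le(\sqrt{p}+x)^2$, already covers this, but it is that case analysis, not convexity over the singular-value interval, that carries the argument there.
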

\begin{proof}
It can be easily derived by Corollary 5.35 in \citet{vershynin2010introduction}.    
\end{proof}

\subsection{The Determination of Rank}
Here, the estimators are constructed with given ranks $r_1,\ldots,r_K$. Determining the number of factors in a data-driven manner has been a significant research topic in the factor model literature. Recently, \cite{han2022rank} established a class of rank determination approaches for tensor factor models, based on both the information criterion and the eigen-ratio criterion. These procedures can be adopted for our purposes.

\subsection{Tucker Decomposition and iterative TIPUP (iTIPUP) Algorithm}
The Higher-Order Orthogonal Iteration (HOOI) algorithm \cite{de2000multilinear} is one of the most popular methods for computing the Tucker decomposition, which relies on the orthogonal projection and Singular Value Decomposition (SVD) to update the estimation of loading matrices for each tensor mode in every iteration. 

    
Similar to the idea of HOOI, an iterative refinement version of TIPUP could further improve its performance, just as HOOI enhances the original Tucker decomposition method. The pseudo-code for this iterative refinement is provided in Algorithm \ref{alg:itipup}.

\subsection{Proof of Proposition \ref{thm:itipup}}
\begin{proof}
We focus on the case of $K=2$ as the iTIPUP begins with mode-$k$ matrix unfolding. In particular, we sometimes give explicit expressions only in the case of $k=1$ and $K=2$. For $K=2$, we observe a matrix time series with $X_t=A_1 F_t A_2^\top + \cE_t \in \mathbb R^{d_1\times d_2}$. Let $\overline\E(\cdot)=\E(\cdot|\{ \cF_1,...,\cF_n\})$ and $\overline\P(\cdot)=\P(\cdot|\{ \cF_1,...,\cF_n\})$.

Let \( L_k^{(m)} \) be the loss for \( \hat{A}_k^{(m)} \) at the \( m \)-th iteration,
\[
L_k^{(m)} = \| \widehat P_k^{(m)} - P_k \|_2, \quad
L^{(m)} = \max_{1 \leq k \leq K} L_k^{(m)},
\]
where \( \widehat P_k^{(m)} = A_k^{(m)} (A_k^{(m)\top} A_k^{(m)})^{-1} A_k^{(m)^\top} \).

We outline the proof as follows.
\paragraph{(i) Initialization.} Define
\[
{\rm TIPUP}_1 = \frac{1}{n} \sum_{t=1}^n X_t X_t^\top \in \mathbb{R}^{d_1 \times d_1},
\]
The ``noiseles'' version of TIPUP$_1$ is
\[
\Theta_1 = \frac{1}{n} \sum_{t=1}^n A_1 F_t A_2^\top A_2 F_1^\top A_1^\top \in \mathbb{R}^{d_1 \times d_1},
\]
which is of rank \( r_1 \). Then, by Davis-Kahan sin \( \Theta \) theorem (Lemma \ref{thm:sin}),
\begin{align} \label{eq:prop1}
L_1^{(0)} = \| P_1 - \widehat P_1^{(0)} \|_2 \le \frac{2 \| {\rm TIPUP}_1 - \Theta_1 \|_2}{\sigma_{r_1}(\Theta_1)}     .
\end{align}
To bound the numerator on the right-hand side of \eqref{eq:prop1}, we write
\begin{align*}
& {\rm TIPUP}_1 - \Theta_1 \\
= & \frac{1}{n} \sum_{t=1}^n A_1 F_t A_2^\top \cE_t^\top + \frac{1}{n} \sum_{t=1}^n \cE_t A_2 \cF_t^\top A_1^\top + \frac{1}{n} \sum_{t=1}^n \cE_t \cE_t^\top \\:= &  \Delta_1 + \Delta_2 + \Delta_3 .
\end{align*}
For \( \Delta_3 \), by Lemma \ref{lemma:singular},
\begin{align*}
&\mathbb{P} \left( \left\|  \sum_{t=1}^n \mathcal{E}_t \mathcal{E}_t^\top - n\mathbb{E} \mathcal{E}_t \mathcal{E}_t^\top \right\|_2 \right. \\ 
& \qquad \geq \left. d_1 + 2 \sqrt{d_1 d_2 n} + 2x(\sqrt{d_1}+\sqrt{d_2n}) + x^2\right)  \\
\leq & 2 e^{-x^2 / 2}  .
\end{align*}
By setting \( x \asymp \sqrt{d_1} \), with probability at least \( 1 - e^{-d_1} \),
\begin{equation*}
\left\| \Delta_3 - \mathbb{E} \mathcal{E}_t \mathcal{E}_t^\top \right\|_2 \lesssim \frac{d_1}{n} + \sqrt{\frac{d_1 d_2}{n}} .
\end{equation*}

Next consider \(\Delta_1\) and \(\Delta_2\). For any \( u \) and \( v \) in \( \R^{d_1} \), we have
\begin{equation*}
\overline{\mathbb{E}} \left( u^\top \left( \frac{1}{\sqrt{n}} \sum_{t=1}^n A_1 F_t A_2^\top \mathcal{E}_t^\top \right) v \right)^2 \leq \| \Theta_1 \|_2 \| u \|_2^2 \| v \|_2^2 .
\end{equation*}
Thus for \( u_i \) and \( v_i \) with \( \| u_i \|_2 = \| v_i \|_2 = 1 \), \( i = 1, 2 \),
\begin{align*}
& n\| \Theta_1 \|_2^{-1}  \overline{\mathbb{E}} \left( (u_1^\top \Delta_1 v_1 - u_2^\top \Delta_1 v_2 ) ^2 \right) \\
&\leq \left( \| u_1 - u_2 \|_2 \| v_1 \|_2 + \| u_2 \|_2 \| v_1 - v_2 \|_2 \right)^2\\
&\leq 2\left( \| u_1 - u_2 \|_2^2 + \| v_1-v_2 \|_2^2 \right)\\
&= 2 \mathbb{E} \left[ (u_1 - u_2)^\top \xi + (v_1 - v_2)^\top \zeta \right]^2 ,
\end{align*}
where \( \xi \) and \( \zeta \) are i.i.d \( N(0, I_{d_1}) \) vectors. As \( \Delta_1 \) is a \( d_1 \times d_1 \) Gaussian matrix under \( \overline{\mathbb{E}} \), the Sudakov-Fernique inequality (Lemma \ref{thm:Sudakov}) yields
\begin{align*}
& \sqrt{n} \| \Theta_1 \|_2^{-1/2} \overline{\mathbb{E}} \| \Delta_1 \|_2 \leq \sqrt{2} \mathbb{E} \sup_{\| u \|_2 = \| v \|_2 = 1} (u^\top \xi + v^\top \zeta) \\
= & \sqrt{2} \mathbb{E} \left( \| \xi \|_2 + \| \zeta \|_2 \right) \leq 2 \sqrt{2d_1} .
\end{align*}
It follows that
\begin{equation*}
\overline{\mathbb{E}} \| \Delta_1 \|_2 \leq \sqrt{\frac{8d_1}{n}} \| \Theta_1 \|_2^{1/2} .
\end{equation*}
Elementary calculation shows that \( \| \Delta_1 \|_2 \) is a \( \sqrt{n} \| \Theta_1 \|_2^{1/2} \)-Lipschitz function. Then, by Gaussian concentration inequality for Lipschitz functions (Lemma \ref{Lemma:Gaussian}),
\begin{equation*}
\overline{\mathbb{P}} \left( \| \Delta_1 \|_2 \geq \sqrt{\frac{8d_1}{n}} \| \Theta_1 \|_2^{1/2} + \sqrt{\frac{1}{n}} \| \Theta_1 \|_2^{1/2} x \right) \leq 2 e^{- \frac{x^2}{2}}   .
\end{equation*}
With \( x \asymp \sqrt{d_1} \), with probability at least \( 1 - e^{-d_1} \),
\begin{equation*}
\| \Delta_2 \|_2 = \| \Delta_1 \|_2 \lesssim \sqrt{\frac{d_1}{n}} \sigma_1(\Theta_1) .
\end{equation*}
As \( \sigma_1(\Theta_1) = \sigma_{r_1}(\Theta_1) \asymp \lambda^2 \), by \eqref{eq:prop1}, with probability at least \( 1 - e^{-d_1} \),
\begin{equation}\label{eq:prop1b}
L_1^{(0)} = \| \widehat P_1^{(0)} - P_1 \|_2 \lesssim \frac{1}{\lambda^2} \sqrt{\frac{d}{n}} +  \frac{d_1}{\lambda^2 n}  + \frac{1}{\lambda} \sqrt{\frac{d_1}{n}}  .
\end{equation}

\paragraph{(ii) Iterative refinement.} After the initialization with \( \widehat{A}_k^{(0)} \), the algorithm iteratively produces estimates \( \widehat{A}_k^{(m)} \) from \( m = 1 \) to \( m = J \). Define the matrix-valued operator TIPUP$_1(\cdot)$ as
\begin{equation*}
{\rm TIPUP_1}(U_2) = \frac{1}{n} \sum_{t=1}^n X_t U_2 U_2^\top X_t^\top \in \mathbb{R}^{d_2 \times d_2},
\end{equation*}
for any matrix-valued variable \( U_2 \in \mathbb{R}^{d_2 \times r_2} \). Given \( \widehat{A}_2^{(m)} \), the \((m+1)\)-th iteration produces estimates
\begin{align*}
\widehat{A}_1^{(m+1)} & = \text{LSVD}_{r_1}(\text{TIPUP}_1(\widehat{A}_2^{(m)})), \\ 
\widehat{P}_1^{(m+1)} & = \widehat{A}_1^{(m+1)} \widehat{A}_1^{(m+1)^\top} .
\end{align*}

The ``noiseless'' version of this update is given by
\begin{equation*}
\Theta_1(U_2) = \frac{1}{n} \sum_{t=1}^n A_1 F_t A_2^\top U_2 U_2^\top A_2 F_t^\top A_1^\top = \Theta_1
\end{equation*}
giving error-free ``estimates'',
\begin{equation*}
\widehat A_1^{(m+1)} = \text{LSVD}_{r_1}(\Theta_1(\widehat{A}_2^{(m)})), 
\end{equation*}
where \( \Theta_1(\widehat{A}_2^{(m)}) \) is of rank \( r_1 \). Thus, by  Davis-Kahan sin \( \Theta \) theorem (Lemma \ref{thm:sin})
\begin{equation} \label{eq:prop2}
L_1^{(m+1)} = \| \widehat{P}_1^{(m+1)} - P_1 \|_2 \leq \frac{2 \| \text{TIPUP}_1(\widehat{A}_2^{(m)}) - \Theta_1(\widehat{A}_2^{(m)}) \|_2}{\sigma_{r_1}[\Theta_1(\widehat{A}_2^{(m)})]} .
\end{equation}
To bound the numerator on the right-hand side of \eqref{eq:prop2}, we write
\begin{equation*}
\text{TIPUP}_1(U_2) - \Theta_1(U_2) := \Delta_1(U_2 U_2^\top) + \Delta_2(U_2 U_2^\top) + \Delta_3(U_2 U_2^\top)   ,
\end{equation*}
where for any \( M_2 \in \mathbb{R}^{d_2 \times d_2} \),
\begin{align*}
\Delta_1(M_2) &= \frac{1}{n} \sum_{t=1}^n A_1 F_t A_2^\top M_2 \mathcal{E}_t^\top, \\
\Delta_2(M_2) &= \frac{1}{n} \sum_{t=1}^n \mathcal{E}_t M_2 A_2 F_t^\top A_1^\top,\\
\Delta_3(M_2) &= \frac{1}{n} \sum_{t=1}^n \mathcal{E}_t M_2 \mathcal{E}_t^\top.
\end{align*}
As \(\Delta_i(M_2)\) is linear in \(M_2\), the numerator on the right-hand side of (\ref{eq:prop2}) can be bounded by
\begin{align*}
& \| \text{TIPUP}_1(\widehat{A}_2^{(m)}) - \Theta_1(\widehat{A}_2^{(m)}) \|_2 \\ 
\leq & \| \text{TIPUP}_1(A_2) - \Theta_1(A_2) \|_2 + L^{(m)} (2K-2) \sum_{i=1}^3 \| \Delta_i(M_2) \|_{\rm S},
\end{align*}
where 
\begin{equation*}
\| \Delta_i(M_2) \|_{\rm S} = \max_{\| M_2 \|_1 \leq 1, \text{rank}(M_2) \leq r_2} \| \Delta_i(M_2) \|_2.
\end{equation*}
We claim in certain events \(\Omega_i\), with \(\mathbb{P}(\Omega_i) \geq 1 - e^{-d_1}\),
\begin{align}
\| \Delta_i(M_2) \|_{\rm S} &\lesssim \sqrt{\frac{d_1+ d_2}{n}} \lambda, \quad i=1,2, \label{eq:prop3} \\
\| \Delta_3(M_2) \|_{\rm S} &\lesssim \frac{d_1 + d_2}{n} + \sqrt{\frac{d_1 + d_2}{n}}.  \label{eq:prop4}
\end{align}
We will also prove by induction, the denominator in (\ref{eq:prop2}) satisfies
\begin{equation} \label{eq:prop5}
2 \sigma_{r_1}[\Theta_1(\widehat{A}_2^{(m)})] \geq \sigma_{r_1}[\Theta_1(A_2)].
\end{equation}

Define the ideal version of the ratio in (\ref{eq:prop2}) as
\begin{align*}
L_k^{(\text{ideal})} & = \frac{\| \text{TIPUP}_1(\odot_{j \neq k} A_j) - \Theta_k(\odot_{j \neq k} A_j) \|_2}{\sigma_{r_1}[\Theta_k(\odot_{j \neq k} A_j)]}, \\ 
L^{(\text{ideal})} & = \max_{1 \leq k \leq K} L_k^{(\text{ideal})} .
\end{align*}
The proof of \eqref{eq:prop3} and \eqref{eq:prop4} implies that, in an event with probability at least \( 1 - \frac{k}{2} e^{-d_k} \),
\begin{equation} \label{eq:prop6}
L_k^{(\text{ideal})} \lesssim \frac{1}{\lambda^2} \sqrt{\frac{d_k}{n}} + \frac{d_k}{\lambda^2 n} + \frac{1}{\lambda} \sqrt{\frac{d_k}{n}}.
\end{equation}
Putting together \eqref{eq:prop1}, \eqref{eq:prop1b}, \eqref{eq:prop2}, \eqref{eq:prop3}, \eqref{eq:prop4} and \eqref{eq:prop5}, we have
\begin{equation*}
L_k^{(m+1)} \leq L_k^{(\text{ideal})} + L^{(m)} \rho,
\end{equation*}
with \(\rho < 1\). By induction,
\begin{equation*}
L^{(m+1)} \leq (1 + \rho +\dots+ \rho^m) L^{(m)} + \rho^{m+1} L^{(0)}.
\end{equation*}
We achieve the desired results.

We divided the rest of the proof into 4 steps to prove \eqref{eq:prop3} and \eqref{eq:prop4} for $i = 1,2,3$ and (\ref{eq:prop5}).

\paragraph{Step 1.} We prove (\ref{eq:prop3}) for the \(\| \Delta_1(M_2) \|_{\rm S}\).

By Lemma \ref{lemma:epsilonnet} (ii), there exists \(\widetilde{M}^{(\ell, \ell')} \in \mathbb{R}^{d_2 \times d_2}\) of the forms \(W_\ell Q_{\ell'}^\top\) with \(W_\ell \in \mathbb{R}^{d_2 \times r_2}\), \(Q_{\ell'} \in \mathbb{R}^{d_2 \times r_2}\), \(1 \leq \ell, \ell' \leq N_{d_2 r_2,1/8} = 17^{d_2 r_2}\), such that $\| \widetilde{M}^{(\ell, \ell')} \|_2 \leq 1$, $\text{rank}(\widetilde{M}^{(\ell, \ell')}) \leq r_2$ and
\begin{equation*}
\| \Delta_1(M_2) \|_{\rm S} \leq 2 \max_{\ell, \ell' \leq N_{d_2 r_2,1/8}} \left\| \frac{1}{n} \sum_{t=1}^n A_1 F_t A_2^\top \widetilde{M}^{(\ell, \ell')} \mathcal{E}_t^\top \right\|_2.
\end{equation*}
Elementary calculation shows that $\| \Delta_1(\widetilde{M}^{(\ell, \ell')}) \|_2$ is a \(\sqrt{n} \| \Theta_1 \|_2\)-Lipschitz function of \(\mathcal{E}_1, \ldots, \mathcal{E}_n\). Employing similar argument as the initialization (Sudakov-Fernique inequality, Lemma \ref{thm:Sudakov}), we have
\begin{equation*}
\overline{\mathbb{E}} \left\| \frac{1}{n} \sum_{t=1}^n A_1 F_t A_2^\top \widetilde{M}^{(\ell, \ell')} \mathcal{E}_t^\top \right\|_2 \leq \sqrt{\frac{8d_1}{n}} \| \Theta_1 \|_2^{1/2}.
\end{equation*}
Then, by Gaussian concentration inequality for Lipschitz functions (Lemma \ref{Lemma:Gaussian}),
\begin{align*}
& \overline{\mathbb{P}} \left( \left\| \frac{1}{n} \sum_{t=1}^n A_1 F_t A_2^\top \widetilde{M}^{(\ell, \ell')} \mathcal{E}_t^\top \right\|_2 - \sqrt{\frac{8d_1}{n}} \| \Theta_1 \|_2^{1/2} \right. \\
& \qquad\geq \left. \frac{1}{\sqrt{n}} \| \Theta_1 \|_2^{1/2} x \right) \\
\leq &  e^{-x^2 / 2}.
\end{align*}
Hence,
\begin{align*}
& \overline{\mathbb{P}} \left( \| \Delta_1(M_2) \|_{\rm S} \geq \sqrt{\frac{8d_1}{n}} \| \Theta_1 \|_2^{1/2} + \frac{1}{\sqrt{n}} \| \Theta_1 \|_2^{1/2} x \right) \\
\leq & N_{d_2 r_2,1/8}^2 e^{-x^2 / 2}.
\end{align*}
This implies that with \( x \asymp \sqrt{d_2 r_2} \) that in an event with probability at least \( 1 - e^{-d_2} \),
\begin{equation*}
\| \Delta_1(M_2) \|_{\rm S} \lesssim \sqrt{\frac{d_1+d_2 r_2}{n}} \lambda \lesssim \sqrt{\frac{d_1+d_2}{n}} \lambda.
\end{equation*}

\paragraph{Step 2.} The bound of \(\| \Delta_2(M_2) \|_{\rm S}\) follows from the same argument as the above step.

\paragraph{Step 3.} Here we prove (\ref{eq:prop4}) for \(\|\Delta_3(M_2)\|_{\rm S}\).
By Lemma \ref{lemma:epsilonnet}(iii), we can find \(U_2^{(\ell)} \in \mathbb{R}^{d_2 \times r_2}\), \(1 \leq \ell \leq N_{d_2 r_2, 1/8}\) such that $\| U_2^{(\ell)} \|_2 \leq 1$ and
\begin{equation*}
\| \Delta_3(M_2) \|_{\rm S} \leq 2 \max_{1 \leq \ell \leq N_{d_2 r_2, 1/8}} \left\| \frac{1}{n} \sum_{t=1}^n \mathcal{E}_t U_2^{(\ell)} U_2^{(\ell)\top} \mathcal{E}_t^\top \right\|_2.
\end{equation*}
By Lemma \ref{lemma:singular},
\begin{align*}
& \mathbb{P} \left( \left\|  \sum_{t=1}^n \mathcal{E}_t U_2^{(\ell)} U_2^{(\ell)\top} \mathcal{E}_t^\top - n\mathbb{E} \mathcal{E}_t U_2^{(\ell)} U_2^{(\ell)\top} \mathcal{E}_t^\top \right\|_2 \right. \\
& \qquad \geq  \left. d_1 + 2 \sqrt{d_1 r_2 n} + 2 x (\sqrt{d_1} + \sqrt{r_2 n}) + x^2 \right) \\
\leq & 2 e^{-x^2 / 2}.
\end{align*}
Hence, by setting \( x \asymp \sqrt{d_2 r_2} \), we have with probability at least \( 1 - e^{-d_2} \),
\begin{align*}
\| \Delta_3(M_2) \|_{\rm S} & \lesssim \frac{d_1}{n} + \sqrt{\frac{d_1 r_2}{n}} + \frac{d_2 r_2}{n} + \sqrt{\frac{d_2 r_2^2}{n^2}} \\
& \lesssim \frac{d_1 + d_2}{n} + \sqrt{\frac{d_1 + d_2}{n}}.
\end{align*}

\paragraph{Step 4.} Next, we prove (\ref{eq:prop5}). Note that
\begin{align*}
& \| \Theta_1(\widehat{A}_2^{(m)}) - \Theta_1(A_2) \|_2  \\ 
&\leq \left\| \frac{1}{n} \sum_{t=1}^n A_1 F_t A_2^\top  (\widehat{A}_2^{(m)} \widehat{A}_2^{(m)\top} - A_2 A_2^\top) A_2 F_t^\top A_1^\top \right\|_2 \\
&\leq \| \widehat{A}_2^{(m)} \widehat{A}_2^{(m)\top} - A_2 A_2^\top \|_2 \| \Theta_1 \|_2 \\
&\leq L^{(m)} \| \Theta_1 \|_2  .
\end{align*}
Thus, by Weyl's inequality (Lemma \ref{thm:weyl}),
\begin{equation*}
\sigma_{r_1}[\Theta_1(\widehat{A}_2^{(m)})] \geq \sigma_{r_1}[\Theta_1(A_2)] - L^{(m)} \| \Theta_1 \|_2 \geq \frac{1}{2} \sigma_{r_1}[\Theta_1(A_2)],
\end{equation*}
when \(\min_{1 \leq k \leq K} \frac{\sigma_{r_k}[\Theta_k(A_2)]}{\| \Theta_k(A_2) \|_2} \geq 2 L^{(m)}\). 
We prove this condition by induction.
\end{proof}

\section{Appendix B: Supplemental Numerical Results}
\label{append:data}

\subsection{Supplement to FAO Data Analysis}

Studies on FAO data can reveal temporal causal links that help inform decisions, policies, and investments related to food and agriculture. The dataset has also been examined in other studies (e.g., \citet{chepeliev2020gtap, lee2023statistical, chevuru2023copernicus}) to address similar issues discussed in this article. Predicting yield and production in South America based on data from East Asia, North America and Europe can aid in analyzing global climate effects. For example, El Ni~no causes droughts in Asia and heavy rains in South America. Studying El Ni~no’s impact on crop yields in East Asia and North America helps anticipate its effects on South American agriculture \citep{anderson2017life}. Such predictions also enhance our understanding of global market dynamics. South America and the US are top soybean producers, and changes in production on one continent influence prices and planting decisions on the other. A US drought reducing soybean yields could drive up prices, prompting South American farmers to increase cultivation, affecting future yields \citep{robinson2015international}.

Table \ref{tab:FAO-statistics} illustrates detailed information about the lists of crop types, livestock types, countries, and metrics involved in the two prediction tasks using FAO data:
\begin{itemize}
    \item[(i)] (crop $\sim$ crop) using Yield and Production data of 11 different crops for 33 countries in East Asia, North America, and Europe (i.e., $\cX_t \in \mathbb{R}^{33\times2\times11}$) to predict the Yield and Production quantities of the same crops for 13 countries in South America (i.e., $\cY_t \in \mathbb{R}^{13\times2\times11}$);
    \item[(ii)] (crop $\sim$ livestock) using four agricultural statistics (Producing Animals, Animals-slaughtered, Milk Animals, Laying) associated with 5 kinds of livestock of 26 selected countries in Europe (i.e., $\cX_t \in \mathbb{R}^{26\times4\times5}$) to predict three metrics (Area-harvested, Production, and Yield) of 11 crops in the same countries (i.e., $\cY_t \in \mathbb{R}^{26\times3\times11}$).
\end{itemize}

Figures \ref{fig:FAO}, \ref{fig:supp-fao1} and \ref{fig:supp-fao2}, visualize the comparisons between ground-truth values and predicted metrics in the two prediction tasks for FAO data. The predicted and actual quantities are represented by the color intensity of each country. A closer examination reveals that our proposed FATTNN model produces predictions that more closely align with the ground truth compared to the benchmark methods (right panel). The color gradients in the FATTNN predictions more accurately reflect the actual data, indicating superior spatial prediction performance. We have also included numerical values tabularly reported in Table \ref{tab:supp1} and Table \ref{tab:supp2}, providing a quantitative comparison that reflects the advantages of our method.

\subsection{Supplement to NYC Taxi Data Analysis}
Factorized temporal tensor acts like time-varying principal components, summarizing the key information of data in reduced dimensionality. Specific interpretation of core tensor varies across datasets. For example, in taxi data, the dimension reduction from (19,19) (pick-up, drop-off districts) in raw data to (4,4) in core tensor reflects clusters of geographical regions that share similar traffic patterns, and the reduction from 8 (hours) in raw data to 2 in core tensor summarizes traffic patterns into rush hour and off-peak hour.

Figure \ref{fig:ManhattanMap} provides a detailed Manhattan district map together with time series plots of hourly traffic trends in Midtown and Downtown Manhattan. The district information is used in segmenting districts into different prediction tasks in NYC Taxi data analysis. We could observe a slightly decreasing trend from the beginning of 2014 to the end of 2015 in both midtown and downtown.

Figure \ref{fig:taxi} (with an enlarged version shown in Figure \ref{fig:taxi-large}) presents a comparison between ground-truth values and predicted pick-up and drop-off volumes using various methods. In this figure, the ground truth is depicted by the blue solid line, while our FATTNN predictions are shown as a dark red line with smaller dots. The FATTNN line closely follows the ground truth, both in shape and location, outperforming the benchmark method represented by the red line. This demonstrates our model's enhanced ability to capture temporal patterns in the New York taxi data.

\subsection{Supplement to FMRI Data Analysis}

Figure \ref{fig:fmri} presents a comparison between ground-truth values and predicted outputs for selected samples in FMRI dataset. 
The figure shows that the FMRI image predicted by FATTNN closely resembles the actual FMRI image shown on the left. In contrast, the images generated by TCN, LSTM, and TRL without considering tensor factor structures, shows a more blur pattern and noticeable discrepancies. The Conv-TT-LSTM also produces a plot that resembles the actual FMRI image well. The figure is also consistent with the test MSE results summarized in Table \ref{tab:realDataAccuracy}, that Conv-TT-LSTM yields the lowest MSE in the FMRI prediction task, followed by the FATTNN. This is not surprising as Conv-TT-LSTM is specifically designed for image prediction. We also observe a noticeable difference in computation time. Though Conv-TT-LSTM has the best prediction accuracy, it is quite computationally heavy - about 50 times slower than FATTNN.

\section{Appendix C: An Extended Discussion \\ on the Generalizations of FATTNN}
\label{append:discussion}

In this article, we focus on tensor-on-tensor time series forecasting. The integration of TIPUP and TCN, both designed specifically for handling temporal data, makes the proposed approach a powerful tool for forecasting tensor series over time. 
For non-temporal data, the TIPUP-TCN-based FATTNN is still applicable for tensor-on-tensor prediction tasks, but the strengths of TIPUP and TCN may not be fully utilized. This limitation can be easily rectified by adopting proper types of tensor factor models and neural networks in the implementation of FATTNN, depending on the nature of the prediction tasks and data types involved.

\emph{Our proposed FATTNN framework is not confined to temporal data but can accommodate a variety of data types}, though we presented our models for the purpose of tensor-on-tensor time series forecasting. 
The main idea of FATTNN is to use a low-rank tensor factor model to capture the intrinsic patterns among the observed covariates, and then proceed with a neural network to model the intricate relationships between covariates and responses. With appropriate choices of factor models and neural network architectures, the proposed FATTNN can be naturally adapted to any simple (e.g., i.i.d observations) or complex tensor-type data (e.g., image, graph, network, text, video). 
When handling time series tensors, we adopt TIPUP factorization and TCN to fully capitalize on their temporal nature. For higher-order tensors (e.g. $K > 4$ ), Tensor-Train \citep{oseledets2011tensor} and Hierarchical Tucker decompositions \citep{lubich2013dynamical} may be better suited. \emph{To extend beyond time series data, different types of factor models and neural network architectures should be deployed, depending on the nature of the data.} For example, for graph data, Graph Neural Network is a more suitable alternative to TCN. For text data, Transformer models could substitute for the TCN in our FATTNN framework. Our proposed FATTNN is flexible in accommodating different types of tensor factor models and deep learning architectures for tensor-on-tensor prediction using diverse data types.

\begin{algorithm*} 
\caption{Iterative TIPUP (iTIPUP) algorithm} \label{alg:itipup} 
\begin{algorithmic}[1]
\STATE \textbf{Input:} \( \mathcal{X}_t \in \mathbb{R}^{d_1 \times \ldots \times d_K} \) for \( t = 1, \ldots, n \), rank \( r_k \) for all \( k = 1, \ldots, K \), the tolerance parameter \( \epsilon > 0 \), the maximum number of iterations \( J \).
    \STATE Let \( j = 0 \), initiate via TIPUP on $\cX_1,...,\cX_n$ to obtain
\begin{align*}
\widehat A_k^{(0)}=\text{LSVD}_{r_k}  \left( \frac{1}{n} \sum_{t=1}^{n} \mat_k(\cX_{t}) \mat_k^\top(\cX_t)\right),    
\end{align*}
where LSVD$_{r_k}$ stands for the top $r_k$ left singular vectors, $k=1,...,K$.

    \REPEAT 
    \STATE Let $j = j + 1$. 
    \FOR{$k = 1, \ldots, K$}
    \STATE Given previous estimates $\widehat A_1^{(j-1)}, \ldots, \widehat A_{k-1}^{(j-1)}, \widehat A_{k+1}^{(j-1)}, \ldots, \widehat A_K^{(j-1)}$, sequentially calculate,
\begin{align*}
Z_{t,k}^{(j)} = \cX_t \times_1 \widehat A_1^{(j)\top} \times_2 \ldots \times_{k-1} \widehat A_{k-1}^{(j)\top}  \times_{k+1} \widehat A_{k+1}^{(j-1)\top} \times_{k+2} \ldots \times_K \widehat A_K^{(j-1)\top},    
\end{align*}
    for $t = 1, \ldots, n$. 
    \STATE Perform TIPUP on the new tensor series $(Z_{1,k}^{(j)}, \ldots, Z_{n,k}^{(j)})$,
\begin{align*}
\widehat A_k^{(j)}=\text{LSVD}_{r_k}  \left( \frac{1}{n} \sum_{t=1}^{n} \mat_k(Z_{t,k}^{(j)}) \mat_k^\top(Z_{t,k}^{(j)})\right).    
\end{align*}
    \ENDFOR
    \UNTIL $j = J$ or
    $\max_{1 \leq k \leq K} \|\widehat A_k^{(j)} \widehat A_k^{(j)\top} - \widehat A_k^{(j-1)} \widehat A_k^{(j-1)\top}\|_2 \leq \epsilon$.
    \STATE \textbf{Output:}
    \begin{align*}
        \widehat A_k &= \widehat A_k^{(j)}, \quad k = 1, \ldots, K, \\
        \widehat F_t &= \mathcal{X}_t \times_{k=1}^K \widehat A_K^\top, \quad t = 1, \ldots, n.
    \end{align*}
\end{algorithmic}
\end{algorithm*}

\begin{table*}
    \centering
    \caption{Details of statistics and metrics involved in the FAO data analysis}
    \begin{tabular}{c|c|c}
     \addlinespace[5pt]
    \multicolumn{3}{c}{(a) Prediction Task (i): crop $\sim$ crop}  \\
    \hline
    \addlinespace[2pt]
    \multicolumn{1}{c}{}& $\mathcal{X}_t \in \mathbb{R}^{33\times2\times11}$ & $\mathcal{Y}_t\in \mathbb{R}^{13\times2\times11}$ \\
    \hline
    Countries
    & \multicolumn{1}{p{5cm}|}{AUT, BGR, CAN, CHN, HRV, CYP, CZE, PRK, DNK, EST, FIN, FRA, DEU, GRC, HUN, IRL, ITA, JPN, LVA, LTU, LUX, MLT, MNG, NLD, POL, PRT, KOR, ROU, SVK, SVN, ESP, SWE, USA (33 countries)}
    & \multicolumn{1}{p{5cm}}{ARG, BOL, BRA, CHL, COL, ECU, GUF, GUY, PRY, PER, SUR, URY, VEN (13 countries)} \\
    \hline
    Crop/Livestock selected 
    & \multicolumn{1}{p{5cm}|}{Cereals, Citrus Fruit, Fibre Crop, Fruit, OliCrops cake equivalent, OliCrops oil equivalent, Pulses, Roots and tubers, Sugar crop, Treenuts \quad (11 types)}
    & \multicolumn{1}{p{5cm}}{the same as in $\mathcal{X}_t$} \\
    \hline
    Metrics involved
    & Yield, Production (2 metrics)
    & \multicolumn{1}{p{5cm}}{the same as in $\mathcal{X}_t$} \\
    \hline
    \addlinespace[10pt]
    \multicolumn{3}{c}{(b) Prediction Task (i): crop $\sim$ livestock}  \\
    \hline
    \addlinespace[2pt]
    \multicolumn{1}{c}{}& $\mathcal{X}_t \in \mathbb{R}^{26\times4\times5}$ & $\mathcal{Y}_t\in \mathbb{R}^{26\times3\times11}$ \\
    \hline
    Countries
    & \multicolumn{1}{p{5cm}|}{AUT, BGR, HRV, CYP, CZE, DNK, EST, FIN, FRA, DEU, GRC, HUN, IRL, ITA, LVA, LTU, LUX, MLT, NLD, POL, PRT, ROU, SVK, SVN, ESP, SWE (26 countries)}
    & \multicolumn{1}{p{5cm}}{the same as in $\mathcal{X}_t$} \\
    \hline
    Crop/Livestock selected 
    & \multicolumn{1}{p{5cm}|}{Beef and Buffalo Meat, Eggs, Poultry Meat, Milk, Sheep and Goat Meat (5 types)} 
    & \multicolumn{1}{p{5cm}}{Cereals, Citrus Fruit, Fibre Crop, Fruit, OliCrops cake equivalent, OliCrops oil equivalent, Pulses, Roots and tubers, Sugar crop, Treenuts \quad (11 types)} \\
    \hline
    Metrics involved
    & \multicolumn{1}{p{5cm}|}{Producing Animals, Animals-slaughtered, Milk Animals, Laying  (4 metrics)}
    & \multicolumn{1}{p{5cm}}{Yield, Production, Area-harvested (3 metrics)} \\
    \hline
    \end{tabular}
\label{tab:FAO-statistics}

Note: A full description of the code definitions is available on the United Nations Food and Agriculture Organization Crops and Livestock Products Database website. 
\end{table*}
\clearpage

\begin{figure*}
    \centering
    \includegraphics[width=1.0\textwidth]{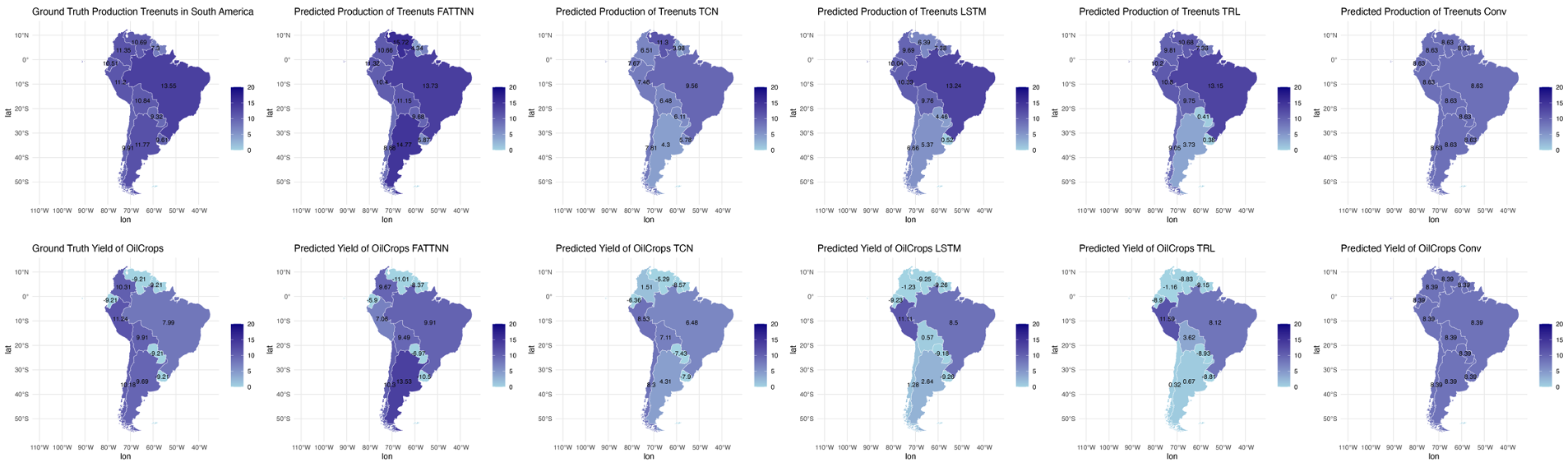} 
    \caption{Comparisons between ground-truth values and predicted Production of Treenuts (top row) and Yield of Oilcrops (bottom row) in South America using the crop data of 33 countries in East Asia, North America, and Europe. Values are plotted on the log-transformed scale. From left to right: Ground truth, FATTNN, TCN, LSTM, TRL, and Conv-TT-LSTM.}
    \label{fig:supp-fao1}
\end{figure*}

\begin{figure*}
    \centering
    \includegraphics[width=1.0\textwidth]{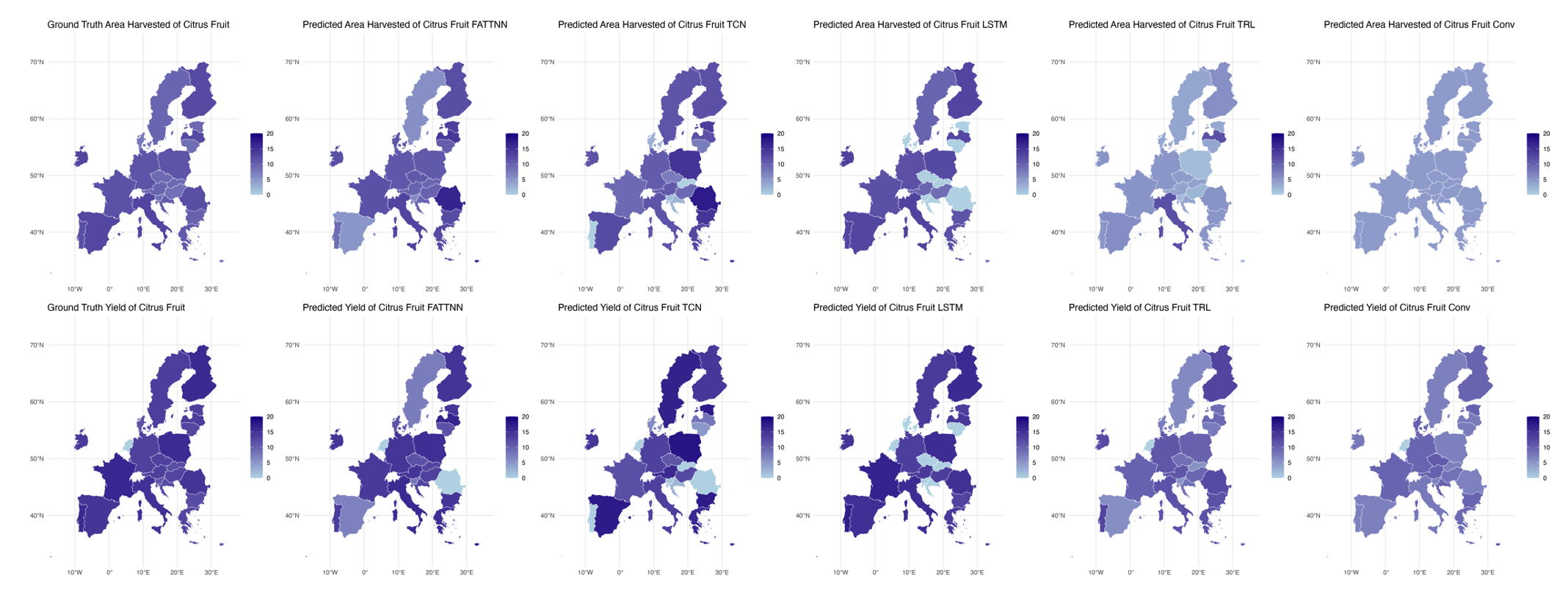}
    \caption{Comparisons between ground-truth values and predicted Area-harvested (top row) and Yield (bottom row) of citrus fruit in 26 selected countries in Europe using the livestock data of the same 26 countries. Values are plotted on the log-transformed scale. From left to right: Ground truth, FATTNN, TCN,  LSTM, TRL, and Conv-TT-LSTM.}
    \label{fig:supp-fao2}
\end{figure*}
\clearpage

\begin{table*}
\centering
\captionsetup{justification=centering} 
\caption{Numerical values of ground-truth and predicted (a) Production of  Treenuts and (b) yield Oilcrops in 13 South America countries using crop data of 33 countries in East Asia, North America, and Europe - a supplement to Figure \ref{fig:FAO}.} 
\label{tab:supp1}
\begin{subtable}{0.6\textwidth}
\captionsetup{justification=centering} 
\caption{\normalsize Predicted Production of Treenuts} \label{tab:predicted_prod}
\begin{tabular}{l|ccc}
\hline
Country & Ground Truth  & FATTNN & TCN ($\mathcal{Y}\sim \mathcal{X}$) \\
\hline
Argentina&11.77&14.77&4.30\\ 
Bolivia&10.84&11.15&6.48\\
Brazil&13.55&13.73&9.56\\ 
Chile&9.91&8.88&7.81\\
Colombia&11.35&10.66&6.51\\
Ecuador&10.51&11.32&7.67\\
French Guiana&-9.21&-0.71&2.80\\
Guyana&7.30&4.34&3.98 \\
Paraguay&9.32&9.68&6.11 \\ 
Peru&11.20&10.40&7.46 \\
Suriname&7.42&7.03&6.02 \\ 
Uruguay&9.61&5.87&5.78 \\ 
Venezuela&10.69&15.72&11.30\\ 
\hline
\end{tabular}
\medskip
\end{subtable}

\hfill

\begin{subtable}{0.6\textwidth}
\captionsetup{justification=centering} 
\caption{\normalsize Predicted Yield of Oilcrops}
\label{tab:predicted_prod2}
\begin{tabular}{l|ccc}
\hline
Country & Ground Truth  & FATTNN & TCN ($\mathcal{Y}\sim \mathcal{X}$) \\ 
\hline
Argentina     & 9.69 & 13.53  & 4.31 \\
Bolivia    & 9.91 & 9.49  & 7.11 \\
Brazil     & 7.99 & 9.91  &  6.48  \\ 
Chile      & 10.18 & 10.30  & 8.30 \\ 
Colombia     & 10.31 & 9.67  & 1.51 \\ 
Ecuador     & -9.21 &  -5.90  &  -6.36 \\ 
French Guiana     & -9.21 & -9.99  & -10.10 \\ 
Guyana     & -9.21 & -8.37  & -8.57 \\ 
Paraguay      & -9.21 & -6.97  & -7.43 \\
Peru     & 11.24 & 7.06  & 8.53 \\ 
Suriname      & -9.21 & -8.32  & -5.64 \\ 
Uruguay     & -9.21 & -10.50  & -7.90 \\ 
Venezuela     & -9.21 & -11.01  & -5.29 \\ 
\hline
\end{tabular}
\end{subtable}
\end{table*}

\begin{table*}
\centering
\captionsetup{justification=centering} 
\caption{Numerical values of ground-truth and (a) predicted Area-Harvested of citrus fruit (b) predicted Yield of citrus fruit, in 26 European countries using livestock data from the same 26 countries - a supplement to Figure \ref{fig:FAO}.} 
\label{tab:supp2}
\begin{subtable}{0.45\textwidth}
\caption{\normalsize Predicted Area-Harvested of Citrus Fruit }
\label{tab:predicted_prod2}
\begin{tabular}{l|ccc}
\hline
Country & Ground Truth  & FATTNN & TCN ($\mathcal{Y}\sim \mathcal{X}$) \\ 
\hline
Austria     & 10.01 & 10.74 & 13.54 \\
Bulgaria    &  9.36 & 11.81 & 13.65 \\
Croatia     &  9.18 & 10.03 &  0.33 \\
Cyprus      &  8.39 & 12.07 & 11.58 \\
Czechia     & 10.07 &  9.54 &  7.65 \\
Denmark     &  8.55 &  7.98 &  1.90 \\
Estonia     &  9.99 & 10.53 & 13.61 \\
Finland     & 12.14 & 14.00 & 11.41 \\
France      & 12.44 & 11.48 &  9.40 \\
Germany     & 10.07 & 11.60 & 10.08 \\
Greece      &  9.65 & 12.03 & 12.86 \\
Hungary     &  9.14 & 11.37 &  8.96 \\
Ireland     & 10.84 & 12.24 &  9.91 \\
Italy       & 11.82 & 11.87 &  9.86 \\
Latvia      & 10.71 & 10.94 &  5.96 \\
Lithuania   &  9.75 &  8.53 &  4.28 \\
Luxembourg  & 10.05 &  8.26 &  3.16 \\
Malta       &  6.42 & -1.43 & -0.02 \\
Netherlands &  9.65 & 10.44 &  7.39 \\
Poland      & 11.98 & 12.27 & 15.52 \\
Portugal    & 12.61 & 12.11 & 22.19 \\
Romania     & 12.02 & 14.08 & 18.88 \\
Slovakia    &  9.00 &  6.94 & -0.97 \\
Slovenia    &  8.20 &  6.61 &  0.83 \\
Spain       & 11.17 & 7.22  & 14.15 \\
Sweden      & 10.10 & 7.71  & 13.22 \\
\hline
\end{tabular}
\end{subtable}
\hfill
\begin{subtable}{0.45\textwidth}
\caption{\normalsize Predicted Yield of Citrus Fruit }
\label{tab:predicted_prod2}
\begin{tabular}{l|ccc}
\hline
Country & Ground Truth  & FATTNN & TCN ($\mathcal{Y}\sim \mathcal{X}$) \\ 
\hline
Austria     & 13.47 & 13.94 & 15.92  \\
Bulgaria    & 12.14 & 15.01 & 17.34      \\
Croatia     & 11.98 & 12.52 &  2.38     \\
Cyprus      & 11.52 & 14.60 & 14.80     \\
Czechia     & 13.38 & 10.68 & 11.13    \\
Denmark     & 11.52 &  9.43 &  5.97    \\
Estonia     & 13.29 & 13.06 & 17.61    \\
Finland     & 15.86 & 16.91 & 14.45   \\
France      & 16.19 & 14.86 & 11.88   \\
Germany     & 13.27 & 15.24 & 13.31    \\
Greece      & 12.87 & 14.67 & 16.19    \\
Hungary     & 12.78 & 14.21 & 12.57    \\
Ireland     & 14.15 & 14.87 & 14.59   \\
Italy       & 15.07 & 14.31 & 13.43   \\
Latvia      & 13.72 & 13.56 &  8.04    \\
Lithuania   & 12.68 &  9.81 &  4.78    \\
Luxembourg  & 12.77 &  9.49 &  3.20    \\
Malta       &  9.76 & -0.76 & -0.49  \\
Netherlands & 12.12 & 12.44 &  8.33    \\
Poland      & 15.74 & 15.40 & 18.19   \\
Portugal    & 15.86 & 16.34 & 25.16    \\
Romania     & 14.76 & 16.14 & 20.95    \\
Slovakia    & 11.99 &  8.17 & -0.96    \\
Slovenia    & 11.36 &  8.58 &  0.84    \\
Spain       & 14.62 &  9.52 & 17.50   \\
Sweden      & 13.63 & 10.19 & 16.74   \\
\hline
\end{tabular}
\end{subtable}
\end{table*}

\clearpage

\begin{figure*}
    \centering
    \includegraphics[width=1.0\textwidth]{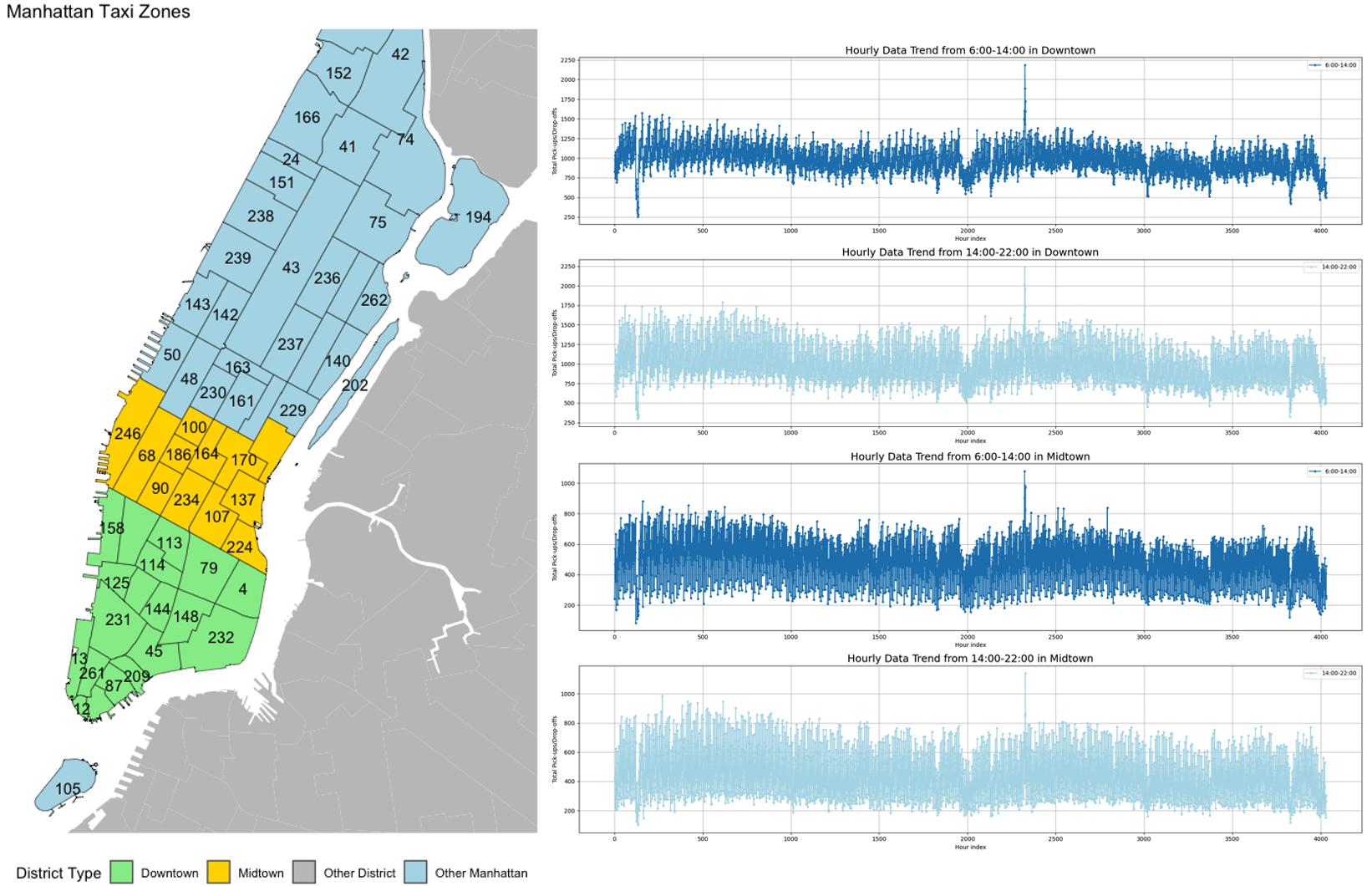} 
    \caption{Left panel: An overview of 69 districts in Manhattan. Districts colored with gold represent Midtown and districts colored with green are categorized as Downtown Manhattan. Right panel: Overall trend of the sum of pick-up and drop-off by hour in Midtown and Downtown Manhattan.}
    \label{fig:ManhattanMap}
\end{figure*} 

\clearpage

\begin{figure*}
    \centering
    \includegraphics[width=\textwidth]{Fig/taxi.png} 
    \caption{Comparisons between ground-truth values and predicted pick-up and drop-off volumes using various methods. This plot is an enlarged version of Figure \ref{fig:taxi} of the main context. ``District A to B'' denotes the traffic volume that passengers were picked up in District A and dropped off in District B. The district numbering is assigned according to the Manhattan district map shown in Figure \ref{fig:ManhattanMap}. Left: Midtown Manhattan; Right: Downtown Manhattan. 
    }
\label{fig:taxi-large}
\end{figure*}

\begin{figure*}
    \centering
    \includegraphics[width=1.0\textwidth]{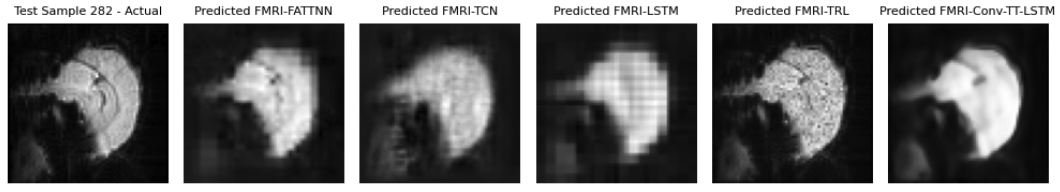} 
    \includegraphics[width=1.0\textwidth]{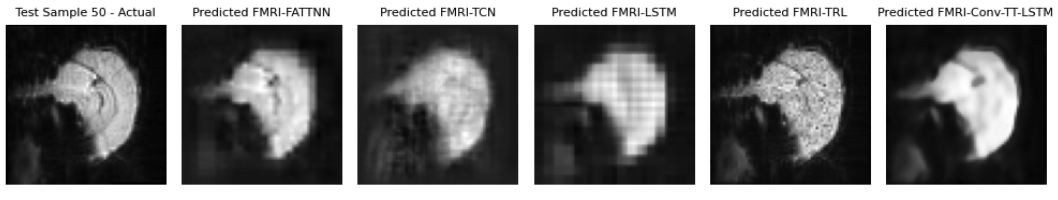} 
    \caption{Comparisons between ground-truth values and predicted outputs for selected samples in FMRI dataset. From left to right: Ground truth, FATTNN, TCN, LSTM, LSTM, TRL, and Conv-TT-LSTM,.}
\label{fig:fmri}
\end{figure*}

\end{document}